\renewcommand\footnotetextcopyrightpermission[1]{} 
  \providecommand\BibTeX{{%
    \normalfont B\kern-0.5em{\scshape i\kern-0.25em b}\kern-0.8em\TeX}}}
\newcommand{\wordtovec}{Word2Vec }
\newcommand{\wordtovecend}{Word2Vec}
\newcommand{\nodetovec}{Vertex2Vec }
\newcommand{\nodetovecend}{Vertex2Vec}
\newcommand{\anytovec}{Any2Vec }
\newcommand{\anytovecend}{Any2Vec}
\newcommand{\graphwordvec}{GraphWord2Vec }
\newcommand{\graphwordvecend}{GraphWord2Vec}
\newcommand{\graphnodevec}{GraphVertex2Vec }
\newcommand{\graphanyvec}{GraphAn\-y2Vec }
\newcommand{\graphanyvecend}{GraphAn\-y2Vec}
\newcommand{\gradshadow}{orthogonality }
\newcommand{\modelcombiner}{Gradient Combiner }
\newcommand{\real}{{\rm I\!R}}
\newcommand{\norm}[1]{\left\lVert#1\right\rVert}
\DeclareMathOperator*{\argmin}{arg\,min}
\begin{document}

\title{Distributed Training of Embeddings using Graph Analytics}

\author{Gurbinder Gill}
\email{gill@cs.utexas.edu}
\affiliation{%
  \institution{The University of Texas at Austin}
  \city{Austin}
  \state{Texas}
  \country{USA}
}

\author{Roshan Dathathri}
\email{roshan@cs.utexas.edu}
\affiliation{%
  \institution{The University of Texas at Austin}
  \city{Austin}
  \state{Texas}
  \country{USA}
}

\author{ Saeed Maleki}
\email{saemal@microsoft.com}
\affiliation{%
  \institution{Microsoft Research}
  \country{USA}
}

\author{Madan Musuvathi}
\email{madanm@microsoft.com}
\affiliation{%
  \institution{Microsoft Research}
  \country{USA}
}

\author{Todd Mytkowicz}
\email{toddm@microsoft.com}
\affiliation{%
  \institution{Microsoft Research}
  \country{USA}
}

\author{Olli Saarikivi}
\email{olsaarik@microsoft.com}
\affiliation{%
  \institution{Microsoft Research}
  \country{USA}
}

\begin{abstract}
  Many applications today, 
such as natural language processing, network analysis, and code analysis, 
rely on semantically embedding objects into low-dimensional
fixed-length vectors. Such embeddings naturally provide a way to perform useful
downstream tasks, such as identifying relations among objects or predicting
objects for a given context, etc. 
Unfortunately, the training necessary for
accurate embeddings is usually computationally intensive and requires processing
large amounts of data. 
Furthermore, distributing this training is challenging.
Most embedding training uses stochastic gradient
descent (SGD), an ``inherently'' sequential algorithm where at each
step, the processing of the current example depends on the
parameters learned from the previous examples.  Prior approaches to
parallelizing SGD do not honor these dependencies and thus
potentially suffer poor convergence.

This paper presents a distributed training framework for a class of applications
that use {\em Skip-gram-like} models 
to generate embeddings. 
We call this class \anytovec and it includes \wordtovecend,
DeepWalk, and Node2Vec
among others. 
We first formulate \anytovec training algorithm as a 
graph application and leverage the state-of-the-art
distributed graph analytics framework, D-Galois. 
We adapt D-Galois to support dynamic graph generation and
re-partitioning, and 
incorporate novel communication optimizations. 
Finally, we introduce a novel way to combine gradients 
during the distributed training to prevent accuracy loss. 
We show that our framework, called \graphanyvecend, 
matches on a cluster of 32 hosts the accuracy of 
the state-of-the-art shared-memory implementations 
of \wordtovec and \nodetovec on 1 host, 
and gives a geo-mean speedup of $12\times$ and $5\times$ 
respectively. 
Furthermore, \graphanyvec is on average $2\times$ faster than 
the state-of-the-art distributed \wordtovec implementation, 
DMTK,  
on 32 hosts. 
We also show the superiority 
of our {\em Gradient Combiner} independent of \graphanyvec by incorporating
it in DMTK, which raises its accuracy by $>30\%$. 

\end{abstract}

\maketitle

\section{Introduction}
\label{sec:intro}
Many applications today, such as natural language processing~\cite{word2vec1,
word2vec2, doc2vec}, network analysis~\cite{deepwalk,
node2vec,lBSN2vec}, and code analysis~\cite{Alon:2019:CLD:3302515.3290353,
code2seq}, rely on semantically embedding objects into low-dimensional
fixed-length vectors. Such embeddings naturally provide a way to perform useful
downstream tasks, such as identifying relations among objects or predicting
objects for a given context, etc. Unfortunately, the training necessary for
accurate embeddings is usually computationally intensive and requires processing
large amounts of data. 

This paper presents a distributed training framework for a class of applications
that use {\em Skip-gram-like} models, like the one used in
\wordtovecend~\cite{word2vec1}, to generate embeddings. We call this class
\anytovec and includes, in addition to \wordtovecend~\cite{word2vec1},
DeepWalk~\cite{deepwalk} and Node2Vec~\cite{node2vec}
among others. Applications in this class maintain a large embedding matrix,
where each row corresponds to the embedding for each object. Given sequences of
objects (text segments for \wordtovec and graph paths in DeepWalk), the training
involves looking up the embedding matrix for the objects in the sequence and
updating them through stochastic gradient descent~(SGD). The details of the how
the sequences are generated and the cost functions used to update the embeddings
varies with the application. 

The key challenge in distributing \anytovec training is that SGD is {\em
inherently} sequential. Two approaches for parallelizing SGD are {asynchronous}
SGD, where multiple nodes racily update~\cite{hogwild} a model that may be housed in a
global parameter server~\cite{40565}, or {synchronous} SGD, where nodes
bulk-synchronously combine individual gradients in a mini-batch update before
updating the model~\cite{JMLR:v15:agarwal14a}. It is well known that the
staleness of updates affects the scalability of the former, while the increase
in mini-batch size affects the scalability of the latter. This is substantiated
in our evaluation. 

To improve scalability over prior methods, this paper introduces
\graphanyvecend, a distributed machine learning framework for \anytovecend. We
first demonstrate that the \anytovec class of machine learning algorithms can be
formulated as a graph application and leverage the ease of programming and
scalability of the state-of-the-art distributed graph analytics frameworks, such
as D-Galois~\cite{gluon} and Gemini~\cite{gemini}. 
To support this new application, 
we extend D-Galois to support dynamic graph generation and
re-partitioning, and implement communication optimizations for reducing the
communication volume, the main bottleneck for these applications at scale.
Finally, we introduce a novel way to combine gradients during distributed
training to prevent accuracy loss when scaling. Rather than simply averaging the
gradients, as in a synchronous mini-batch SGD, our 
\modelcombiner (GC) performs a
weighted combination on gradients based on whether they are parallel or
orthogonal to each other.   
  

We evaluate two applications, \wordtovec and \nodetovecend, in our \graphanyvec framework 
on a cluster of up to 32 machines with 3 different datasets each. We 
compare \graphanyvec training time and
accuracy with the state-of-the-art shared-memory implementations
(original C implementation~\cite{word2vec2} and Gensim~\cite{gensim} for \wordtovec  
and DeepWalk~\cite{deepwalk} for \nodetovecend) as 
well as with the state-of-the-art distributed parameter-server \wordtovec
implementation in Microsoft's Distributed Machine Learning 
Toolkit (DMTK)~\cite{DMTK}. 
We show that compared to shared-memory implementations, \graphanyvec can 
reduce the training time for \wordtovec from 21 hours to less than 2 hours on our 
largest dataset of Wikipedia articles
while matching the SGD accuracy of shared-memory implementations, 
and gives a geo-mean 
speedup of $12\times$ and $5\times$ for \wordtovec and \nodetovec respectively.
On 32 hosts, \graphanyvec is on average $2\times$ faster 
than DMTK. We also show the superiority 
of our {\em Gradient Combiner} (GC) independent of \graphanyvec by incorporating
it in DMTK, which raises its accuracy by $>30\%$ 
so that it matches its own shared-memory implementation. 

The rest of this paper is organized as follows. 
Section~\ref{sec:background} provides a background 
on SGD, training of \anytovec and graph analytics. 
Section~\ref{sec:word2vec_graph} describes our \graphanyvec 
framework and 
Section~\ref{sec:combiner} describes our novel 
way to combine gradients. 
Section~\ref{sec:evaluation} presents our evaluation 
of \wordtovec and \nodetovec using \graphanyvecend. 
Related work and conclusions are presented in 
Sections~\ref{sec:related} and~\ref{sec:conclusions}.

\section{Background}
\label{sec:background}
In this section, we first briefly describe how stochastic gradient descent is
used to train machine learning models (Section~\ref{subsec:sgd}), followed by how Any2Vec models are
trained with Word2Vec as an example (Section~\ref{subsec:any2vec}).
We then provide an overview of graph analytics 
(Section~\ref{subsec:graph_analytics}).

\subsection{Stochastic Gradient Descent}
\label{subsec:sgd}

We express the training task of a machine learning model as a set of multivariable loss functions $L_i(w):\real^n\rightarrow \real$
where $w$ is the model and each $L_i$ corresponds to the training sample $i$. The output of $L_i(w)$ is a positive
value that correlates the prediction of the model $w$ to the label of sample $i$. Perfect prediction has a loss of 0.
The ultimate goal is to find $w$ that minimizes the loss function across all samples: $\argmin_w:\sum_i L_i(w)$.

Stochastic Gradient Descent (SGD)~\cite{bottou2012stochastic} is a popular algorithm for machine learning training.
The model is initially set to a random guess $w_0$ and 
at iteration or sample $i$, 
\[w_i:=w_{i-1} - \alpha \cdot \frac{\partial L_i}{\partial w}\bigg|_{w_{i-1}} \] where $\alpha$ is the {\it learning rate} 
and $\frac{\partial L_i}{\partial w}\big|_{w_{i-1}}$ is the {\it gradient} of
$L_i$ at $w_{i-1}$. Training is complete when the model reaches a desired loss
or evaluation accuracy. An \emph{epoch} of training is the number of updates
needed to go through the whole dataset once.

The fact that SGD's update rule for $w_i$ depends on $w_{i-1}$ makes SGD an
inherently sequential algorithm. A well-known technique to introduce parallelism
is {\em mini-batch} SGD~\cite{gdstudy}, wherein the gradient is calculated as an
average over $n$ training examples, where $n$ is the \emph{mini-batch size}.
When $n$ is 1 this is equivalent to normal SGD.


Hogwild!~\cite{hogwild} is another well-known SGD parallelization technique,
wherein multiple threads compute gradients for different training examples in
parallel and update the model in a racy fashion. Surprisingly, this approach
works well on a shared-memory system, especially with models where gradients are
sparse.

This paper uses intuitions based on the {\bf Taylor expansion of SGD} to develop
new techniques for parallelizing SGD. Applying the SGD update rule to a loss
function $L_i$ and expanding with the Taylor approximation gives:
\begin{equation} \label{eq:tesgd}
\begin{split}
L_i(w_i)&=L_i(w_{i-1}-\alpha\cdot g_i)\approx L_i(w_{i-1})-\alpha\cdot g_i^T \cdot \frac{\partial L_i}{\partial w}\bigg|_{w_{i-1}} \\
        &=L_i(w_{i-1})-\alpha\cdot g_i^T\cdot g_i \\
				&=L_i(w_{i-1})-\alpha\cdot \norm{g_i}^2
\end{split}
\end{equation}
As it is clear from Equation~\ref{eq:tesgd}, 
moving in the direction of the gradient reduces the loss. 
Note that the learning rate, $\alpha$, is a delicate hyper-parameter: a small $\alpha$ decays the loss insignificantly and for large $\alpha$,
the Taylor expansion approximation breaks and the model diverges.

\subsection{Training Any2Vec Embeddings}
\label{subsec:any2vec}
\begin{figure}
    \includegraphics[trim={2cm 0 0 0},width=0.47\textwidth]{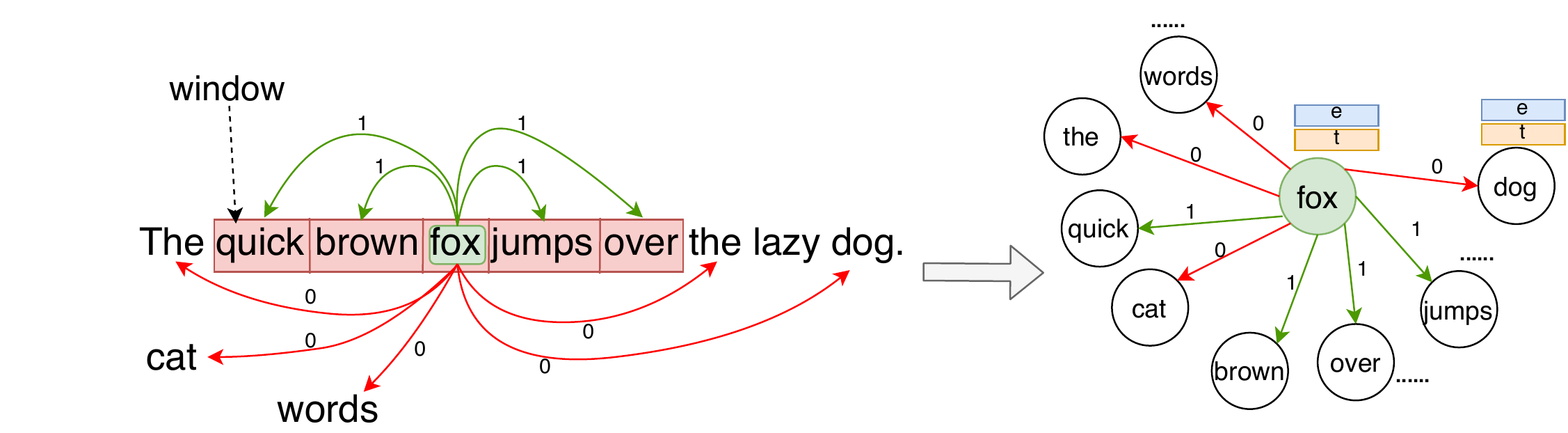}
    \caption{Viewing \wordtovec in Skip-gram model as a graph.}
    \label{fig:word2vec_basic}
\end{figure}


An embedding is a mapping from a dataset $D$ to a vector space $\real^N$ such
that elements of $D$ that are related are close to each other in the vector
space. The length $N$ of the \emph{embedding vectors} is typically much smaller
than the dimension of $D$.

Many models have been proposed for learning word
embeddings~\cite{word2vec1,word2vec2}. We will focus on the popular Skip-gram
model together with the negative sampling introduced in~\cite{word2vec2}, and
explain it here in a form suitable for a graph analytic understanding.

Skip-gram uses a training task where it predicts if a target word $w_O$
appears in the context of a center word $w_I$.
%
Figure~\ref{fig:word2vec_basic} (left) illustrates this for an example sentence,
with ``fox'' as the center word. The context
is then defined as the words inside a window of size $c$ (a hyper-parameter)
centered on ``fox''. In Figure~\ref{fig:word2vec_basic} these \emph{positive}
samples are shown in green and have a label of 1. For each positive sample
Skip-gram picks $k$ (a hyper-parameter) random words as \emph{negative} samples
and gives them a label of 0.


The Skip-gram model consists of two vectors of size $N$ for each word $w$ in the
vocabulary: an {\em embedding} vector $e_w$ and a {\em training} vector $t_w$.
For a pair of words the model predicts the label with $\sigma(e_{w_I}^T\cdot
t_{w_O})$, which should be close to 1 for related words and close to 0
otherwise. The loss term for a sample is then $-\log (1-|y-\sigma(e_{w_I}^T\cdot
t_{w_O})|)$, where $y$ is the true label of the sample.

We base the work in this paper on Google's \wordtovec
tool\footnote{https://code.google.com/archive/p/word2vec/}, which uses the
Hogwild! parallelization technique. Each thread is given a subset of the corpus
and goes through the words in it sequentially (skipping some due to sub-sampling
frequent words as described in~\cite{word2vec2}). For each pair of a central word and a target
word in its context \wordtovec calculates a gradient using a sum of the loss
term for the positive sample itself and loss terms for $k$ negative
samples. This gradient is then applied to the model shared by all threads in a
racy manner and the thread continues onto the next pair of words.

\subsection{Graph Analytics}
\label{subsec:graph_analytics}

In typical graph analytics applications, 
each node has one or more labels, which are 
updated during algorithm execution 
until a global quiescence condition is reached. 
The labels are updated by iteratively applying a computation rule, known as an {\it operator},
to the nodes or edges in the graph. 
The order in which the operator is applied to the nodes or edges 
is known as the {\it schedule}.
A {\em node operator} takes a node $n$ and updates labels
on $n$ or its neighbors, whereas an {\em edge operator} takes an edge $e$ and updates labels
on source and destination of $e$.


To execute graph applications in distributed-memory, 
the edges are first partitioned~\cite{cusp} among the hosts
and for each edge on a host, 
proxies are created for its endpoints. 
As a consequence of this design, a node might have proxies (or replicas) on many hosts.
One of these is chosen as the {\it master} proxy to hold the canonical
value of the node. The others are known as {\it mirror} proxies. 
Several heuristics exist for partitioning edges and choosing {\it master}
proxies~\cite{partitioningstudy}.

Most distributed graph analytics systems~\cite{powergraph,gemini,gluon} 
use bulk-synchronous parallel (BSP) execution.
Execution is done in rounds 
of computation followed by bulk-synchronous communication.
In the computation phase, every host applies the {\it operator} 
inside its own partition and updates the labels of the local proxies. 
Thus, different proxies of the same node might have different values.
Every host then participates in a global communication phase 
to synchronize the labels of all proxies. 
Different proxies of the same node are reconciled 
by applying a {\it reduction} operator, 
which depends on the algorithm being executed. 

\section{Distributed \anytovecend}
\label{sec:word2vec_graph}

In this section, we first describe the formulation of \anytovec as a graph application 
and provide an overview of our distributed \graphanyvec (Section~\ref{sec:overview}). We then describe the different 
phases in our approach such as dynamic graph generation and partitioning 
(Section~\ref{sub:graph-construction}), 
model synchronization (Section~\ref{sec:model-sync}), and communication optimizations (Section~\ref{sec:comm-opt}).

\subsection{Overview of Distributed \graphanyvecend}
\label{sec:overview}

We formulate \anytovec as a graph problem and 
call it \graphanyvecend.
Each element in the dataset 
corresponds to a node in a graph, 
and the positive and negative samples correspond to  
edges in the graph with weights 1 and 0 respectively. 
Figure~\ref{fig:word2vec_basic} (right) illustrates this 
for \wordtovecend. 
Training the Skip-gram model is now a graph analytics application.
Each node has two labels --- $e$ and $t$ ---
for embedding and training vectors,
respectively, of size $N$. 
The model corresponds to these labels for all nodes.
These labels are initialized randomly and 
updated during training by applying 
an \emph{edge operator}, that 
takes the source $src$ and destination $dst$ of an edge 
with weight $w$, 
computes $\sigma(e_{\texttt{src}}^T\cdot t_{\texttt{dst}})$ to 
predict the relation between the two nodes, 
and then applies the SGD update rule to 
$e_{\texttt{src}}$ and $t_{\texttt{dst}}$ 
so as to minimize the loss function $-\log (1-|w-\sigma(e_{\texttt{src}}^T\cdot
t_{\texttt{dst}})|)$.
The operator is applied to all edges once in each epoch.

\begin{algorithm}[t]
  \small
  \caption{Execution on each distributed host of \graphanyvecend.}
  \label{alg:execution}
  \begin{algorithmic}[1]
    \Procedure{\graphanyvecend}{Corpus $C$, Num. of epochs $R$, Num. of sync rounds $S$, Learning rate $\alpha$}
    \State Let $h$ be the host ID
    \State Stream $C$ from disk to build set of vertices $V$
    \State Read partition $h$ of $C$ that forms the work-list of vertices $W$
    \For{epoch $r$ from 1 to $R$}
    \For{sync round $s$ from 1 to $S$}
    \State Let $W_s$ be partition $s$ of $W$
    \State Build graph $G$ = ($V$, $E$) where $E$ are samples in $W_s$
    \State Compute($G$, $W_s$, $\alpha$) \Comment{Updates $G$ and decays $\alpha$}
    \State Synchronize($G$) \Comment{Updates $G$}
    \EndFor
    \EndFor
    \EndProcedure
  \end{algorithmic}
\end{algorithm}


Algorithm~\ref{alg:execution} gives a brief overview of our distributed \graphanyvec execution.
The first step is to construct the set of vertices $V$ (unique words in case of \wordtovecend)
by making a pass over 
the training data corpus $C$ on each host in parallel. 
As $C$
may not fit in the memory of a single host, we stream it from disk to construct $V$.
%
The corpus $C$ is then partitioned 
(logically) into roughly equal contiguous chunks among 
hosts. All hosts read their own partition of $C$ in parallel.
The list of elements in a given host's partition of $C$ 
constitutes the work-list\footnote{The work-list $W$
does not change across epochs, so we construct it once and reuse it 
for all epochs and synchronization rounds. 
However, if it does not fit in memory, partition $s$ of $W$ 
can be constructed from the corpus in each synchronization round.} $W$
that the host is responsible for computing \anytovec on.
We introduce a new parameter 
for controlling the number of synchronization rounds 
within an epoch.
In each epoch on each host, 
$W$ is partitioned into roughly equal contiguous chunks among the rounds. 
In each round $s$, 
positive and negative samples from partition $s$ of $W$ 
are used to construct the graph. 
The \anytovec {\it operator} is then applied to all edges in the graph.
The operator updates the vertex labels directly 
and decays the learning rate continuously, 
as in shared-memory implementation of \anytovec applications.
Then, all hosts participate in a bulk-synchronous communication 
to synchronize the vertex labels.

We implement \graphanyvec in D-Galois~\cite{gluon}, the state-of-the-art distributed graph analytics framework, which 
consists of the Galois~\cite{galois} multi-threaded library for computation and 
the Gluon~\cite{gluon} communication substrate 
for synchronization.
Galois provides efficient, concurrent data structures like 
graphs, work-lists, dynamic bit-vectors, etc., 
which makes it quite straightforward to implement \graphanyvecend. 
Gluon incorporates communication optimizations 
that enable it to scale to a large number of hosts.
However, D-Galois only works with static graphs 
(nodes and edges must not change during algorithm execution), 
whereas, for \anytovec applications, edges are sampled randomly and generated. 
We adapted D-Galois to handle dynamic graph generation efficiently during 
computation and communication, 
as explained in Sections~\ref{sub:graph-construction} and~\ref{sec:comm-opt} 
respectively. 
Our techniques can be used to modify other 
distributed graph analytics frameworks and implement \graphanyvec in them.
%

\subsection{Graph Generation and Partitioning} 
\label{sub:graph-construction}



As explained in Section~\ref{subsec:any2vec}, 
the Skip-gram model generates positive and negative samples 
using randomization. 
Consequently, the samples or edges generated for the same element or node in the corpus 
in different epochs may be different. 
As the same edge may not be generated again, 
one way to abstract this is to consider that 
the edges are being streamed and each edge is processed only once, 
even across epochs. 
Due to this, the graph needs to be constructed in each synchronization round, 
as shown in Algorithm~\ref{alg:execution}.

The graph can be explicitly constructed in each round. 
However, this may add unnecessary overheads as each edge is processed only once 
before the graph is destroyed. 
More importantly, this does not distinguish between 
edges (samples) from different occurrences of the same node (element) in the corpus. 
Consequently, the relative ordering of the edges from different nodes is not 
preserved. 
We observed that the accuracy of the model is highly sensitive 
to the order in which the edges are processed 
because the learning rate decays after each occurrence of the node is processed. 
Hence, the key to our graph formulation is that on each host,
{\it the schedule of applying operators on edges in \graphanyvec
must match the order in which samples would be processed in \anytovecend}. 
Note that the work-list $W$ preserves the ordering of element occurrences in the corpus.
Thus, \graphanyvec generates or streams edges on-the-fly using partition $s$ of $W$ in round $s$, 
instead of constructing the graph.

Each host generates edges for its own partition of the graph in each 
synchronization round. In other words, 
the graph is re-partitioned in every round. 
By design, each edge is assigned to a unique host. 
As mentioned in Section~\ref{subsec:graph_analytics}, 
node proxies are created for the endpoints of edges on a host. 
The {\it master} proxy for each node can be chosen 
from among its proxies thus created, 
but this would incur overheads in every round. 
We instead (logically) partition the nodes 
once into roughly equal contiguous chunks among the hosts 
and each host creates master proxies for the nodes in its partition. 
Proxies for other nodes on the host would be 
{\it mirror} proxies. 
Each mirror knows the host that has its master 
using the partitioning of nodes. 
Each master also needs to know the hosts with its mirrors.
We provide two ways to do this: RepModel and PullModel.

In {\bf RepModel}, each host has proxies for all nodes, 
so the entire model is replicated on each host. Thus, 
each host statically knows that every other host has mirror proxies 
for the masters on it. 
This allows \graphanyvec to assume that an edge 
between any two nodes can be generated on any host.  
In {\bf PullModel}, 
each hosts makes an inspection pass over $W$ 
before computation in each round to 
generate edges and track the nodes that would be accessed 
during computation. 
Mirror proxies are then created for the nodes tracked. 
For each mirror proxy created, the host communicates 
to the host that has its master (bulk-synchronization).

RepModel requires the entire model to fit in the memory of a host, 
while PullModel enables handling larger models.
On the other hand, PullModel incurs overhead for determining 
masters and mirrors in each round, whereas 
RepModel does not. 
Nonetheless, 
RepModel and PullModel require different communication 
to synchronize the model, 
so we evaluate which of them performs better 
in Section~\ref{sec:evaluation}.

\begin{figure}
  \centering
    \includegraphics[width=.5\textwidth]{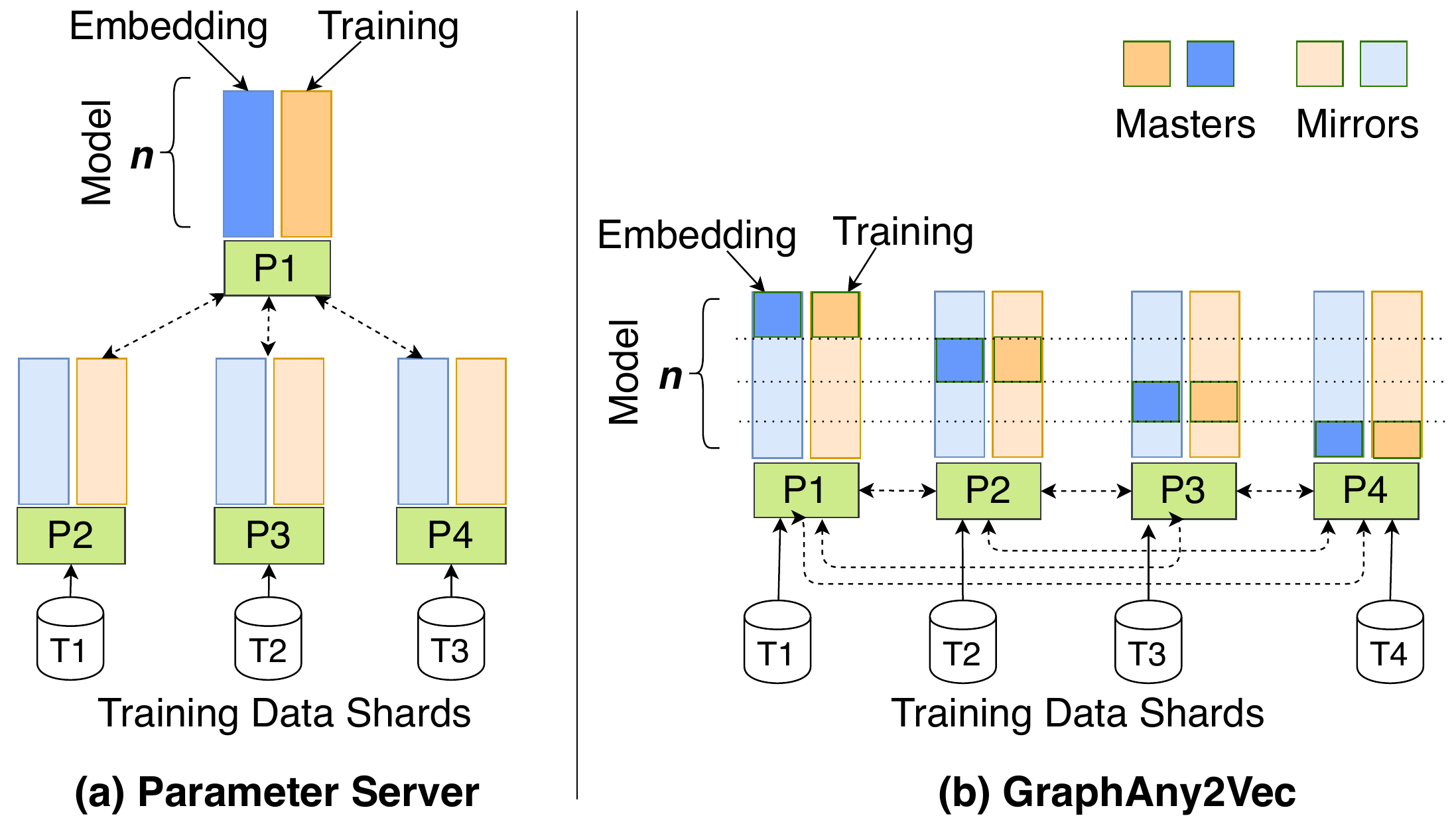}
\caption{Synchronization of the Skip-gram Model.}
\label{fig:model-sync}
\end{figure}



\begin{figure}[t]
  \centering
    \includegraphics[width=0.48\textwidth]{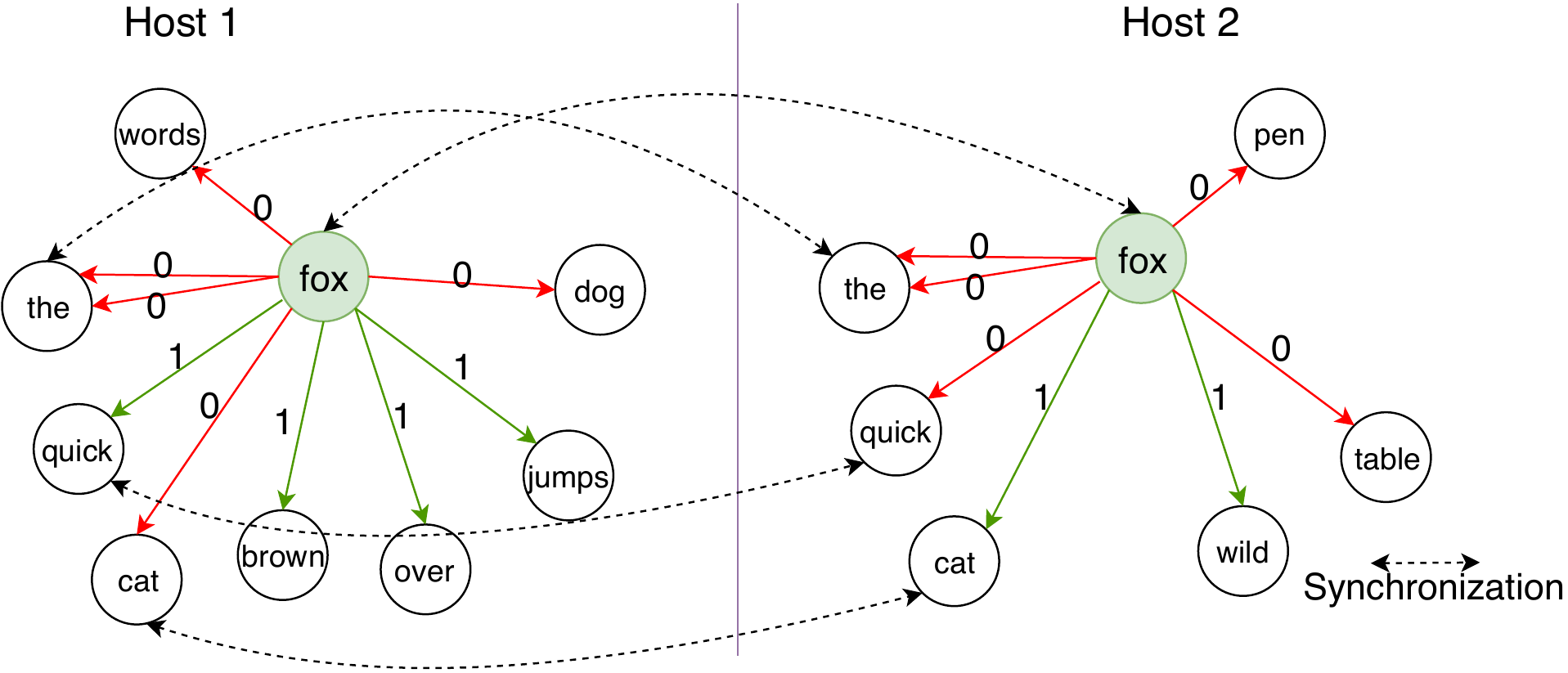}
\caption{Synchronization of proxies in graph partitions.}
\label{fig:word2vec_sync}
\end{figure}

\subsection{Model Synchronization}
\label{sec:model-sync}

Prior work for distributed \wordtovec such as 
Microsoft's Distributed Machine Learning Toolkit (DMTK)~\cite{DMTK} (and other machine learning algorithms) 
use a parameter server to synchronize the model, 
as illustrated in Figure~\ref{fig:model-sync}(a).
One of the hosts (say $P1$) is chosen as the parameter server.
At the beginning of a round (or a mini-batch), 
every host receives the updated model from the parameter server. 
The host then computes that round and sends the model updates to the parameter server.
\graphanyvec uses a different synchronization model based on D-Galois~\cite{gluon}, 
as illustrated in Figure~\ref{fig:model-sync}(b). 
Abstractly, this can be viewed as a generalization of the parameter server model 
where each host acts as a parameter server for a partition of the model.
%
In Figure~\ref{fig:model-sync}(b), 
$P1$ has the master proxies for the first contiguous chunk or partition of the nodes, 
$P2$ has the master proxies for the second partition of the nodes, and so on.
During synchronization in D-Galois, 
the mirror proxies send their updated value to the host containing the master, 
which reduces it and broadcasts it to the hosts containing the mirrors. 

Figure~\ref{fig:word2vec_sync} shows an example 
where proxies on two hosts need to be synchronized after computation. 
Consider the word "{\em fox}" that is present on both hosts. 
It may have different values for the embeddings on both hosts after computation.
The reduction operator determines how to synchronize these values 
and this is a parameter to synchronization in D-Galois.
As described in Section~\ref{sec:combiner}, 
averaging or adding the two values may lead to slower convergence, 
so we introduce a novel way to combine them called 
Gradient Combiner. 

\subsection{Communication Optimizations}
\label{sec:comm-opt}

{\bf RepModel-Naive}:
During synchronization in RepModel,
all mirrors on each host 
can send their updates to their respective masters and 
the masters can reduce those values and broadcast it to their mirrors. 
This is similar to communication for dense matrix codes, 
so can be mapped quite efficiently to MPI collectives. 
However, in \anytovecend, not all nodes are updated in every round. 
Consequently, such naive communication would result in 
redundant communication during both reduce and broadcast phases.

{\bf RepModel-Opt}:
The advantage of D-Galois is that it allows the user to specify the updated nodes 
and it would transparently handle the sparse communication that would entail. 
To do this, we maintain a bit-vector that tracks the nodes that were updated in this round. 
During synchronization, only the updated mirrors are sent to their masters 
and the masters broadcast their values to the other hosts only if it was updated on any host in that round. 
This avoids redundant communication during the reduce phase. 
However, there is still some redundancy during the broadcast phase 
because the update sent to a mirror might not be accessed by the mirror in the next round. 
This information remains unknown in RepModel. 

{\bf PullModel-Base}: 
In PullModel, 
mirrors are created after inspection only if 
one of its labels will be accessed on that host.
During synchronization, only the mirrors updated in 
this round are sent to their masters
(similar to RepModel-Opt). 
However, we wait to broadcast after inspection of the next round 
when new mirrors are created (re-partitioning).
During broadcast, all masters must be broadcast whether updated or
not, because previous updates may not have been sent to a host if it
did not have a mirror during a previous round.
This is essentially pulling the model that will be accessed (like in parameter server).
While this avoids sending masters to mirrors that do not access it, 
it may resend masters that have have been updated. 

{\bf PullModel-Opt}: 
Recall from Section~\ref{subsec:any2vec} that 
embedding vectors $e$ are accessed only at the source and 
training vectors $t$ are accessed only at the destination 
of an edge. 
If a mirror proxy on a host has only outgoing (or incoming) edges, 
then it will not access $t$ (or $e$). 
This is not exploited in PullModel-Base because 
masters and mirrors are not label-specific in D-Galois. 
We modified D-Galois to maintain masters and mirrors specific 
to each label. 
We also modified our inspection phase to track sources and 
destinations separately, and 
create mirrors for $e$ and $t$ respectively. 
Due to this, masters will broadcast $e$ and $t$ only to 
those hosts that access each.

\section{\modelcombiner}
\label{sec:combiner}

Section~\ref{sec:background} discussed how in the mini-batch approach gradients from 
multiple training examples are computed in parallel and they are reduced to a single 
vector by averaging. Although this is a widely-used practice, it does not follow
the semantics of the sequential algorithm.  
Suppose $L_1(w)$ and $L_2(w)$ are two loss functions corresponding
to two training examples. Starting from model $w_0$, sequential
SGD calculates $w_1=w_0 - \alpha \frac{\partial L_1}{\partial w}|_{w_0}$
followed by $w_2=w_1-\alpha \frac{\partial L_2}{\partial w}|_{w_1}$
where $\alpha$ is a proper learning rate. With forward substituition,
$w_2=w_0-\alpha (\frac{\partial L_2}{\partial w}|_{w_1}+\frac{\partial L_1}{\partial w}|_{w_0})$. 
Alternatively, in a parallel
setting, $\frac{\partial L_1}{\partial w}|_{w_0}$ and 
$\frac{\partial L_2}{\partial w}|_{w_0}$ (note that gradients are both
at $w_0$) are computed and $w$ is updated with $w_2'=w_0
-\frac{\alpha}{2}(\frac{\partial L_1}{\partial w}|_{w_0}
+\frac{\partial L_2}{\partial w}|_{w_0})$. Clearly $w_2'$ and $w_2$
are different because of the averaging effect. \cite{directsum} and~\cite{sqrtlr}
have claimed that scaling up the learning rate by the number of 
parallel processors (or square root of it) closes this gap. However, if
$\frac{\partial L_1}{\partial w}|_{w_0}$ and $\frac{\partial L_2}{\partial w}|_{w_0}$
are both in the same direction, scaling up the learning rate might cause divergence
as we assumed $\alpha$ was properly set for the sequential algorithm. Our {\em \modelcombiner}
addresses this problem by adjusting the gradients to each other.

Following the Taylor expansion for $\frac{\partial L_2}{\partial w}|_{w_0+(w_1-w_0)}$,
we have:
\begin{equation}\label{eq:te}
\begin{split}
\frac{\partial L_2}{\partial w}\Big|_{w_0+(w_1-w_0)} &\approx \frac{\partial L_2}{\partial w}\Big|_{w_0} 
+\frac{\partial^2 L_2}{(\partial w)^2}\Big|_{w_0}\cdot (w_1-w_0) \\
&=\frac{\partial L_2}{\partial w}\Big|_{w_0}-\alpha \frac{\partial^2 L_2}{(\partial w)^2}\Big|_{w_0} \cdot \frac{\partial L_1}{\partial w}\Big|_{w_0}
\end{split}
\end{equation}
where the approximation error is $O(\norm{w_1-w_0}^2)$ which is alternatively
$\alpha^2O\Big(\norm{\frac{\partial L_1}{\partial w}|_{w_0}}^2\Big)$. As
the learning rate $\alpha$ gets smaller, the error in Formula~\ref{eq:te} shrinks quadratically. Usually as the training 
of a \anytovec model progresses, the learning rate is decayed and as a result this error becomes negligible. 
For the rest of this section, we denote gradients $g_1=\frac{\partial L_1}{\partial w}|_{w_0}$,
$g_2=\frac{\partial L_2}{\partial w}|_{w_0}$, $g_2'=\frac{\partial L_2}{\partial w}|_{w_1}$ and the Hessian matrix $H_2=\frac{\partial^2 L_2}{(\partial w)^2}|_{w_0}$.
Therefore, Equation~\ref{eq:te} can be re-written by: 
\begin{equation}\label{eq:tes}
g_2'\approx g_2-\alpha H_2\cdot g_1
\end{equation}

Equation~\ref{eq:tes} lets us compute $g_2'$, however, computing $H_2$ is expensive as it is a $n\times n$ matrix
where $n$ is the number of parameters in the \anytovec model. Luckily because \anytovec has a log-likelihood loss
function, $H_2$
can be expressed by the outer product of the gradient: $\lambda g_2\cdot g_2^T$ where $\lambda$ is a scalar
which depends on $w_0$~\cite{msra}. The error for this approximation gets smaller as $w_0\rightarrow w^*$ where $w^*$ is
the optimal model parameters~\cite{msra,ggt}. By using Equation~\ref{eq:tes} and this approximation, $g_2'$ can be approximated by:
\begin{equation} \label{eq:teggt}
g_2'\approx g_2-\alpha \lambda g_2\cdot g_2^T\cdot g_1
\end{equation}
Although Formula~\ref{eq:teggt} makes calculation of $g_2'$ feasible, finding the right $\lambda$ for every
iteration of SGD is overwhelmingly difficult and it is yet another hyper-parameter for the user to tune. However,
if $g_1$ was orthogonal to $g_2$, then $g_2'$ could have been easily estimated by $g_2$. This is the intuition behind \modelcombiner.

Given that $g_1$ and $g_2$ are not always orthogonal, we project $g_1$ on the orthogonal space of $g_2$ to
make $g_1^O$:
\begin{equation}\label{eq:orth}
g_1^O=g_1-\frac{g_2^T\cdot g_1}{\norm{g_2}^2}g_2
\end{equation}
$g_1^O$ has three important properties: (1) $g_1^T\cdot g_1^O \geq 0$, (2) $\norm{g_1^O}\leq\norm{g_1}$,
and (3) $g_2^T\cdot g_1^O=0$. It is straight forward to check these properties 
(see Appendix~\ref{sec:prop} for full proof). Suppose $w_1^O=w_0 - \alpha g_1^O$. 
Then by using the Taylor expansion, we have:
\begin{equation}\label{eq:dl}
L_1(w_1^O) = L_1(w_0-\alpha g_1^O) \approx L_1(w_0) - \alpha g_1^T \cdot g_1^O \leq L_1(w_0)
\end{equation}
where the last inequality comes from property (1). This means that
moving in the direction of $g_1^O$ decays the loss of $L_1$. Also, because of property (2) and the fact
that the same learning rate as sequential learning rate is used, the approximation in Equation~\ref{eq:dl}
has the same or lower error as the one with $L_1(w_1)$ if we had the Taylor expansion for it (refer to Equation~\ref{eq:tesgd}). 
Property (3) and Equation~\ref{eq:teggt} ensure that 
$\frac{\partial L_2}{\partial w}|_{w_1^O}\approx \frac{\partial L_2}{\partial w}|_{w_0}=g_2$. Let's assume that sequential
SGD uses $g_1^O$ to get to $w_1^O=w_0-\alpha g_1^O$ followed by computing $\frac{\partial L_2}{\partial w}|_{w_1^O}$ which
can be approximated by $g_2$ to
get to $w_2^O=w_1^O-\alpha g_2$. By forward substituition:
\begin{equation}\label{eq:sgdcombiner}
w_2^O=w_0 - \alpha (g_2+g_1^O) = w_0 - \alpha \Bigg(g_2 + g_1-\frac{g_2^T\cdot g_1}{\norm{g_2}^2}g_2\Bigg)
\end{equation}
Therefore, the direction \modelcombiner ($GC(g_1,g_2)$ for short) 
uses to move is:
\begin{equation}\label{eq:gc}
GC(g_1,g_2)=g_2 + g_1-\frac{g_2^T\cdot g_1}{\norm{g_2}^2}g_2
\end{equation}	
which 
allows computation of $g_1$ and $g_2$ in parallel. Note that our gradient combiner requires slightly more computation than
averaging but as we will show in Section~\ref{sec:evaluation}, this overhead is negligible.

The magnitude of $GC(g_1,g_2)$ depends on how parallel or orthogonal $g_1$ and $g_2$ are to each other.
In the parallel case, $g_1^O$ becomes smaller and 
therefore, moving in the direction of $g_1^O$ decays the loss value of $L_1$ slower than $g_1$. 
However, as we will show in Section~\ref{sec:evaluation}, the gradients all start parallel to each other
in the begining of the training as they all point in the same general direction and later in the training
they become more orthogonal. This means that \modelcombiner conservatively takes small steps in the begining of
the training and larger ones as the training progresses.

\modelcombiner extends to combining $k$ gradients as well where gradients are combined sequentially manner as 
discussed in Section~\ref{sec:model-sync} ($GC(g_1,\dots,g_k)=GC(g_k,\dots,GC(g_3,GC(g_2,g_1))\dots)$). 
Appendix~\ref{sec:combiner-proof} proves the convergence of \modelcombiner in expectation. 


The effectiveness of \modelcombiner depends on the degree to which the gradients are orthogonal. 
We define 
\begin{equation}\label{eq:gradshadow}
O(g_1,g_2)=\frac{\norm{GC(g_1,g_2)^2}}{\norm{g_1}^2+\norm{g_2}^2}=\frac{\norm{g_1^O+g_2}^2}{\norm{g_1}^2+\norm{g_2}^2}
\end{equation}
as a notion for \gradshadow of $g_1$ and $g_2$. Note that because of property (3) $g_1^O$ and $g_2$ are orthogonal and
$\norm{g_1^O+g_2}^2 =\norm{g_1^O}^2+\norm{g_2}^2$ thanks to Pythagorean theorem. Therefore,
$\norm{g_1^O+g_2}^2 =\norm{g_1^O}^2+\norm{g_2}^2\leq \norm{g_1}^2+\norm{g_2}^2$ which concludes that
$t\leq 1$ and equality is met only when $g_1$ and $g_2$ are orthogonal. On the other hand, if $g_1=g_2$,
$g_1^O$ becomes zero and $O(g_1,g_2)=\frac{1}{2}$. This can be similarly expanded to $k$ gradients
$O(g_1,\dots,g_k)=\frac{GC(g_1,\dots,g_k)^2}{\sum_i\norm{g_i}^2}$. Similarly for $k$ gradients,
\gradshadow is $1$ when they are all orthogonal and $\frac{1}{k}$ when they are all the same. 


\section{Evaluation}
\label{sec:evaluation}

\begin{table}[t]
  \footnotesize
  \centering

  \caption{Datasets and their properties.}
  \label{tbl:datasets}
  \begin{tabular}{l@{\hskip 8pt}l@{\hskip 5pt}l@{\hskip 5pt}l@{\hskip 5pt}l@{\hskip 10pt}l}
  \toprule
  \multicolumn{1}{l}{\begin{tabular}[l]{@{}l@{}}\textbf{Applications}\end{tabular}} & \multicolumn{1}{l}{\begin{tabular}[l]{@{}l@{}}\textbf{Datasets}\end{tabular}}  &{\begin{tabular}[l]{@{}l@{}}\textbf{Vocabulary}\\\textbf{Words}\end{tabular}}  & \multicolumn{1}{l}{\begin{tabular}[l]{@{}l@{}}\textbf{Training}\\\textbf{Words}\end{tabular}} & \multicolumn{1}{l}{\begin{tabular}[l]{@{}l@{}}\textbf{Size}\end{tabular}}& \multicolumn{1}{l}{\begin{tabular}[l]{@{\hskip -10pt}l@{}}\textbf{Labels}\end{tabular}} \\
  \midrule
    \multirow{3}{*}{{\begin{tabular}[c]{@{}c@{}}\textbf{Word2Vec}\\\textbf{(Text)}\end{tabular}}}    & \textbf{1-billion} & 399.0K              & 665.5M                        & 3.7GB & N/A        \\
                                          & \textbf{news}                              & 479.3K              & 714.1M                        & 3.9GB & N/A        \\
                                          &  \textbf{wiki}                 & 2759.5K             & 3594.1M                       & 21GB     & N/A     \\ 

  \midrule
    \multirow{3}{*}{{\begin{tabular}[c]{@{}c@{}}\textbf{\nodetovec}\\\textbf{(Graph)}\end{tabular}}}    & \textbf{BlogCatalog}     & 10.3K             & 4.1M                        & 0.02GB      & 39   \\
                                          & \textbf{Flickr}     & 80.5K             & 32.2M                        & 0.18GB         & 195   \\  
                                          & \textbf{Youtube}   & 1138.5K           &455.4M                        & 2.8GB           & 47     \\

  \bottomrule
  \end{tabular}
\end{table}


  \begin{table}[t]
    \footnotesize
   \caption{\wordtovec training time (hours) on a single host.}
   \label{tbl:all_time}
    \centering
  \begin{tabular}{l|r|r|r|r}
   
       \toprule
       \multicolumn{1}{c|}{\textbf{Dataset}}   & \multicolumn{1}{c|}{\textbf{W2V}} & \multicolumn{1}{c|}{\textbf{GEM}}& \multicolumn{1}{c|}{\textbf{DMTK(AVG)}} & \multicolumn{1}{c}{\textbf{GW2V(GC)}} \\
     \midrule
     \multirow{1}{*}{\textbf{1-billion}} &   4.24     &  4.39 & 4.21  & 3.98      \\
     \multirow{1}{*}{\textbf{news}}      &   4.45     &  4.66 & 4.28  & 4.51      \\
     \multirow{1}{*}{\textbf{wiki}}      &   20.49     &  OOM     & 25.43  & 22.34      \\
       \bottomrule
     \end{tabular}
     \end{table}

  \begin{table}[t]
    \footnotesize
   \centering
   \caption{\wordtovec accuracy (semantic, syntactic, and total) of different frameworks relative to W2V.}
   \label{tbl:3rdparty_acc}
  \begin{tabular}{@{ }l@{\hskip1.8pt}|r@{\hskip5pt}r@{\hskip5pt}r@{\hskip5pt}r@{ }}
       \toprule
       & \multicolumn{1}{c|}{\textbf{Framework}}         & \multicolumn{1}{c|}{\textbf{1-billion}} & \multicolumn{1}{|c|}{\textbf{news}} & \multicolumn{1}{|c}{\textbf{wiki}} \\ 
       \midrule
       \multirow{8}{*}{\rotatebox[origin=c]{90}{\textbf{Semantic}}} & \multicolumn{1}{l|}{\colorbox{white!20}{W2V (1 Host)}}  & \multicolumn{1}{c|}{\colorbox{blue!20}{75.86$\pm$0.07}}    & \multicolumn{1}{c|}{\colorbox{blue!20}{70.79$\pm0.54$}}    &  \multicolumn{1}{c}{\colorbox{blue!20}{79.10$\pm0.31$}} \\
                                          &\multicolumn{1}{l|}{\colorbox{white!20}{GEN (1 Host)}} & \multicolumn{1}{c|}{\colorbox{red!20}{  -0.22}}    &\multicolumn{1}{c|}{\colorbox{green!20}{ -0.22}}    &  \multicolumn{1}{c}{OOM} \\
                                          &\multicolumn{1}{l|}{\colorbox{white!20}{DMTK (1 Host)}} & \multicolumn{1}{c|}{\colorbox{red!20}{-13.79}} &\multicolumn{1}{c|}{\colorbox{red!20}{-18.43}}    &  \multicolumn{1}{c}{\colorbox{red!20}{-7.46}} \\
                                          &\multicolumn{1}{l|}{\colorbox{white!20}{DMTK(AVG) (32 Hosts)}} & \multicolumn{1}{c|}{\colorbox{red!20}{-57.36}}    &\multicolumn{1}{c|}{\colorbox{red!20}{-57.15}}    &  \multicolumn{1}{c}{\colorbox{red!20}{-34.39}} \\
 
                                          &\multicolumn{1}{l|}{\colorbox{white!20}{DMTK(GC) (32 Hosts)}} & \multicolumn{1}{c|}{\colorbox{red!20}{-10.93}}    &\multicolumn{1}{c|}{\colorbox{red!20}{-17}}    &  \multicolumn{1}{c}{\colorbox{red!20}{-5.17 }} \\
 
                                          &\multicolumn{1}{l|}{\colorbox{white!20}{{GW2V (1 Host)}}} & \multicolumn{1}{c|}{\colorbox{green!20}{+0.07}}    &\multicolumn{1}{c|}{\colorbox{green!20}{-0.08}}    &  \multicolumn{1}{c}{\colorbox{green!20}{+0.26}} \\
                                          &\multicolumn{1}{l|}{\colorbox{white!20}{{GW2V(AVG) (32 Hosts)}}} & \multicolumn{1}{c|}{\colorbox{red!20}{-7.00}}    &\multicolumn{1}{c|}{\colorbox{red!20}{-9.15}}    &  \multicolumn{1}{c}{\colorbox{red!20}{-4.03 }} \\

                                          &\multicolumn{1}{l|}{\colorbox{white!20}{{GW2V(GC) (32 Hosts)}}} & \multicolumn{1}{c|}{\colorbox{green!20}{+0.21}}    &\multicolumn{1}{c|}{\colorbox{green!20}{+0.07}}    &  \multicolumn{1}{c}{\colorbox{green!20}{-0.17}} \\
 
       \midrule
       \multirow{8}{*}{\rotatebox[origin=c]{90}{\textbf{Syntactic}}} & \multicolumn{1}{l|}{\colorbox{white!20}{W2V (1 Host)}}  & \multicolumn{1}{c|}{\colorbox{blue!20}{50.0$\pm0.18$}}    & \multicolumn{1}{c|}{\colorbox{blue!20}{50.0$\pm0.26$}}    &  \multicolumn{1}{c}{\colorbox{blue!20}{49.22$\pm0.12$}} \\
                                          &\multicolumn{1}{l|}{\colorbox{white!20}{GEN (1 Host)}}  & \multicolumn{1}{c|}{\colorbox{green!20}{ -0.14}}    &\multicolumn{1}{c|}{\colorbox{green!20}{ -0.12}}    &  \multicolumn{1}{c}{OOM} \\
 
                                          &\multicolumn{1}{l|}{\colorbox{white!20}{DMTK (1 Host)}} & \multicolumn{1}{c|}{\colorbox{red!20}{ -1.89}}&  \multicolumn{1}{c|}{\colorbox{red!20}{ -0.67 }}& \multicolumn{1}{c}{\colorbox{red!20}{ -3.11}}\\ 
                                          &\multicolumn{1}{l|}{\colorbox{white!20}{DMTK(AVG) (32 Hosts)}} & \multicolumn{1}{c|}{\colorbox{red!20}{-24.89}}&\multicolumn{1}{c|}{\colorbox{red!20}{-25.11}}&\multicolumn{1}{c}{\colorbox{red!20}{-23.11}}\\
 
                                          &\multicolumn{1}{l|}{\colorbox{white!20}{DMTK(GC) (32 Hosts)}} &\multicolumn{1}{c|}{\colorbox{red!20}{-3.56}}&\multicolumn{1}{c|}{\colorbox{red!20}{ -1.78}}&\multicolumn{1}{c}{\colorbox{red!20}{-1.44}}\\
 
                                          &\multicolumn{1}{l|}{\colorbox{white!20}{{GW2V (1 Host)}}} & \multicolumn{1}{c|}{\colorbox{red!20}{-0.37}}    &\multicolumn{1}{c|}{\colorbox{green!20}{0.0}}    &  \multicolumn{1}{c}{\colorbox{green!20}{ -0.12}} \\
                                          
                                          &\multicolumn{1}{l|}{\colorbox{white!20}{{GW2V(AVG) (32 Hosts)}}} & \multicolumn{1}{c|}{\colorbox{red!20}{ -4.89}}    &\multicolumn{1}{c|}{\colorbox{red!20}{ -4.11}}    &  \multicolumn{1}{c}{\colorbox{red!20}{ -7.55}} \\
 
                                          &\multicolumn{1}{l|}{\colorbox{white!20}{{GW2V(GC) (32 Hostss)}}} & \multicolumn{1}{c|}{\colorbox{green!20}{ +0.10}}    &\multicolumn{1}{c|}{\colorbox{green!20}{ +0.11}}    &  \multicolumn{1}{c}{\colorbox{green!20}{ +0.18}} \\
 
       \midrule
       \multirow{8}{*}{\rotatebox[origin=c]{90}{\textbf{Total}}} & \multicolumn{1}{l|}{\colorbox{white!20}{W2V (1 Host)}}  & \multicolumn{1}{c|}{\colorbox{blue!20}{72.36$\pm0.21$}}    & \multicolumn{1}{c|}{\colorbox{blue!20}{69.21$\pm0.42$}}    &  \multicolumn{1}{c}{\colorbox{blue!20}{74.10$\pm0.42$}} \\
                                          &\multicolumn{1}{l|}{\colorbox{white!20}{GEN (1 Host)}}  & \multicolumn{1}{c|}{\colorbox{green!20}{ +0.0}}    &\multicolumn{1}{c|}{\colorbox{green!20}{ -0.14}}    &  \multicolumn{1}{c}{OOM} \\

                                          &\multicolumn{1}{l|}{\colorbox{white!20}{DMTK (1 Host)}} & \multicolumn{1}{c|}{\colorbox{red!20}{-11.65}}& \multicolumn{1}{c|}{\colorbox{red!20}{-15.42}}& \multicolumn{1}{c}{\colorbox{red!20}{ -3.03}}\\ 
                                          &\multicolumn{1}{l|}{\colorbox{white!20}{DMTK(AVG) (32 Hosts)}}&\multicolumn{1}{c|}{\colorbox{red!20}{-51.29}}& \multicolumn{1}{c|}{\colorbox{red!20}{-51.71}}& \multicolumn{1}{c}{\colorbox{red!20}{-32.03}}\\
 
                                          &\multicolumn{1}{l|}{\colorbox{white!20}{DMTK(GC) (32 Hosts)}} &\multicolumn{1}{c|}{\colorbox{red!20}{ -9.86}}&\multicolumn{1}{c|}{\colorbox{red!20}{ -14.78}}&\multicolumn{1}{c}{\colorbox{red!20}{-5.24}}\\
 
                                          &\multicolumn{1}{l|}{\colorbox{white!20}{{GW2V (1 Host)}}} & \multicolumn{1}{c|}{\colorbox{green!20}{ -0.14}}    &\multicolumn{1}{c|}{\colorbox{green!20}{ -0.28}}    &  \multicolumn{1}{c}{\colorbox{green!20}{ +0.1}} \\
                                          
                                          &\multicolumn{1}{l|}{\colorbox{white!20}{{GW2V(AVG) (32 Hosts)}}} & \multicolumn{1}{c|}{\colorbox{red!20}{ -6.79}}    &\multicolumn{1}{c|}{\colorbox{red!20}{-9.28}}    &  \multicolumn{1}{c}{\colorbox{red!20}{-5.17}} \\
 
                                          &\multicolumn{1}{l|}{\colorbox{white!20}{{GW2V(GC) (32 Hostss)}}} & \multicolumn{1}{c|}{\colorbox{green!20}{ +0.14}}    &\multicolumn{1}{c|}{\colorbox{green!20}{ +0.29}}    &  \multicolumn{1}{c}{\colorbox{green!20}{ -0.17}} \\

      \bottomrule
     \end{tabular}
   \end{table}

  


We implement distributed \wordtovec and 
\nodetovec in our \graphanyvec framework, 
and we refer to these applications 
as \graphwordvec (GW2V) and \graphnodevec (GV2V) 
respectively. 
First, we compare these with the state-of-the-art 
third-party implementations (Section~\ref{subsec:compare-third-party}). 
We then analyze the impact of our Gradient Combiner 
(Section~\ref{subsec:eval-reduction}) 
and communication optimizations 
(Section~\ref{subsec:eval-comm-opt}). 
Our evaluation methodology is described in detail in 
Appendix~\ref{subsec:eval-methodology}.

\subsection{Comparing With The State-of-The-Art} 
\label{subsec:compare-third-party}

\noindent{\textbf{\wordtovecend:}} 
We compare GW2V with 
distributed-memory implementation DMTK~\cite{DMTK} 
and shared-memory implementations, W2V~\cite{word2vec2} 
and GEN~\cite{gensim}. 
Table~\ref{tbl:all_time} compares their 
training time on a single host. 
Figure~\ref{fig:res:speedup} shows the speedup 
of both GW2V and DMTK on 32 hosts over W2V on 1 host. 
Note that averaging (AVG) and 
our \modelcombiner (GC) methods are used to combine gradients 
during inter-host synchronization, 
so they have no impact on a single host.

{\it Performance:}
We observe that for all datasets on a single host, 
the training time of GW2V is similar to that of W2V, GEN, and DMTK. 
GW2V scales up to 32 hosts and speeds up the training time by 
$\sim13\times$ on average over 1 host.
In comparison with distributed DMTK on 32 hosts,
which uses parameter servers for synchronization
, GW2V is $\sim2\times$ faster on average for all datasets.
Figure~\ref{fig:res:speedup} also shows that 
there is negligible performance between using AVG and using GC 
to combine gradients in both DMTK and GW2V. 
Training wiki using GW2V takes only {\bf 1.9 hours}, 
which saves {\bf 18.6 hours} and {\bf 1.5 hours} 
compared to training using W2V and DMTK respectively. 

To understand the performance differences between DMTK and GW2V better, 
Figure~\ref{fig:res:dmtk_v3_stacked} shows the 
breakdown of their training time into 3 phases: 
inspection, computation, and (non-overlapped) communication. 
Firstly, GW2V's inspection phase as well as serialization and de-serialization during synchronization
are parallel using D-Galois~\cite{galois,gluon} parallel constructs and concurrent data-structures such 
as bit-vectors, work-lists, etc., whereas 
these phases are sequential in DMTK as it
uses non-concurrent data-structures such as set and vector provided by the C++ standard template library.
Moreover, in GW2V, 
hosts can update their masters in-place. 
This is not possible in DMTK as workers on each host 
have to fetch model parameters from servers on the same host 
to update, incurring overhead for additional copies.
Secondly, DMTK communicates much higher volume ($\sim3.5\times$) 
than \graphwordvecend.
GW2V memoizes the node IDs exchanged during inspection phase 
and sends only the updated values during broadcast and reduction. 
In contrast, DMTK sends the node IDs along with the updated 
values to the parameter servers 
during both broadcast and reduction. 
In addition, 
\graphwordvec inspection precisely identifies both the positive and negative 
samples required for the current round. 
DMTK, on the other hand, only identifies precise positive samples, 
and builds a pool for negative samples. 
During computation, negative samples are randomly picked from 
this pool. 
The entire pool is communicated from and to the 
parameter servers, 
although some of them may not be updated, 
leading to redundant communication.

\begin{table}
  \centering
    \caption{\nodetovec training time (sec) of DeepWalk on 1 host vs. \graphnodevec (GV2V) on 16 hosts.}
\label{tbl:3rdparty_time_node2vec}
\footnotesize
\begin{tabular}{@{ }l@{\hskip1.8pt}|r@{\hskip5pt}|r@{\hskip5pt}|r@{\hskip5pt}|r@{\hskip5pt}|r@{\hskip5pt}|r@{\hskip5pt}|r@{\hskip3pt}r@{\hskip5pt}r@{ }}
  \toprule
  \multicolumn{1}{c|}{\textbf{Dataset}}         & \multicolumn{1}{c|}{\textbf{DeepWalk}}  & \multicolumn{1}{c|}{\textbf{GV2V}} & \multicolumn{1}{c|}{\textbf{Speedup}} \\ 
\midrule
\multirow{1}{*}{\textbf{BlogCatalog}} & 115.3   & 28.8                & 4.0x                          \\
\multirow{1}{*}{\textbf{Flickr}}      & 976.7   & 183.1               & 5.3x                        \\
\multirow{1}{*}{\textbf{Youtube}}     & 11589.2 & 2226.2              & 5.2x                          \\

  \bottomrule
\end{tabular}

\end{table}

    \begin{table}[t]
      \footnotesize
      \centering
      
        \caption{\nodetovec accuracy (Macro F1 and Micro F1) of GV2V on 16 hosts relative to DeepWalk on 1 host.}
        \vspace{-10pt}
        \label{tbl:3rdparty_acc_node2vec}
      \begin{tabular}{l|l|l|rrrrrrrr}
      \toprule
       
          &        & \multicolumn{1}{c|}{{\begin{tabular}[l]{@{}l@{}}\textbf{\% Labeled}\\\textbf{Nodes}\end{tabular}}}   & \multicolumn{1}{c|}{\text{30\%}} & \multicolumn{1}{c|}{\text{60\%}} & \multicolumn{1}{c}{\text{90\%}} \\

        \midrule
          \multirow{4}{*}{\textbf{Micro-F1}}&\multirow{2}{*}{\text{BlogCatalog}} & \multicolumn{1}{l|}{\text{Deepwalk}} &   \multicolumn{1}{c|}{34.0}   &  \multicolumn{1}{c|}{37.2}         &  \multicolumn{1}{c}{38.4}   \\ 
                                             &                                  &   \multicolumn{1}{l|}{\colorbox{white!20}{\text{GV2V}}}      & \multicolumn{1}{c|}{\colorbox{white!20}{-0.1}}   & \multicolumn{1}{c|}{\colorbox{white!20}{+0.1}}         & \multicolumn{1}{c}{\colorbox{white!20}{+0.7}}      \\ 
  
        \cline{2-6}
                                             &\multirow{2}{*}{\text{Flickr}} & \multicolumn{1}{l|}{\text{Deepwalk}}     &  \multicolumn{1}{c|}{38.6}   &  \multicolumn{1}{c|}{40.4}         &  \multicolumn{1}{c}{41.1}   \\ 
                                             &                                   &   \multicolumn{1}{l|}{\colorbox{white!20}{\text{GV2V}}}      & \multicolumn{1}{c|}{\colorbox{white!20}{+0.1}}   & \multicolumn{1}{c|}{\colorbox{white!20}{-0.1}}         & \multicolumn{1}{c}{\colorbox{white!20}{-0.2}}     \\ 
  
        \midrule
          \multirow{4}{*}{\textbf{Macro-F1}}&\multirow{2}{*}{\text{BlogCatalog}} & \multicolumn{1}{l|}{\text{Deepwalk}}       &  \multicolumn{1}{c|}{34.1}   &  \multicolumn{1}{c|}{37.2}         &  \multicolumn{1}{c}{38.4}   \\ 
                                             &                                  &   \multicolumn{1}{l|}{\colorbox{white!20}{\text{GV2V}}}      & \multicolumn{1}{c|}{\colorbox{white!20}{-0.3}}   & \multicolumn{1}{c|}{\colorbox{white!20}{+0.1}}        & \multicolumn{1}{c}{\colorbox{white!20}{+0.7}}     \\ 
        \cline{2-6}
                                             &\multirow{2}{*}{\text{Flickr}} & \multicolumn{1}{l|}{\text{Deepwalk}}       &  \multicolumn{1}{c|}{26.5}   &  \multicolumn{1}{c|}{28.7}         &  \multicolumn{1}{c}{29.5}   \\ 
                                             &                                   &   \multicolumn{1}{l|}{\colorbox{white!20}{\text{GV2V}}}         & \multicolumn{1}{c|}{\colorbox{white!20}{+0.1}}   & \multicolumn{1}{c|}{\colorbox{white!20}{-0.1}}       & \multicolumn{1}{c}{\colorbox{white!20}{+0.3}}      \\ 
  
        \bottomrule
      \end{tabular}
      \end{table}

{\it Accuracy:} 
Table~\ref{tbl:3rdparty_acc} compares the accuracies (semantic, syntactic, and 
total) for all frameworks on 1 and 32 hosts 
relative to the accuracies achieved by W2V.
On a single host, GW2V
is able to achieve accuracies (semantic, 
syntactic and total) comparable to W2V.
DMTK on a single host is 
less accurate due to implementation differences in 
the Skip-gram model training; 
DMTK only updates learning rate between mini-batches, 
whereas others continuously degrade learning rate, 
and DMTK uses a different strategy to 
choose negative samples as described earlier. 
On 32 hosts,  
DMTK(AVG) has terrible accuracy and 
GW2V(AVG) has poor accuracy.
GC significantly improves the accuracies 
over AVG for both DMTK and GW2V. 
DMTK(GC) improves semantic by $\textbf{37.91\%}$, syntactic by $\textbf{22.09\%}$, and total
by $\textbf{34.79\%}$
to match its own single host accuracy.
GW2V(GC) improves all accuracies 
to match 
that of W2V. 

\noindent{\textbf{\nodetovecend:}} 
Table~\ref{tbl:3rdparty_time_node2vec} compares the training time of 
DeepWalk~\cite{deepwalk} on a single host 
with our GV2V 
on 16 hosts.
We observe that for all datasets, 
\graphnodevec can train the model $\sim4.8\times$ 
faster on average.
Similar to \graphwordvecend, 
this speedup does not come at the cost of the accuracy,
as shown in Table~\ref{tbl:3rdparty_acc_node2vec}, 
which shows the {\em Micro-F1} and 
{\em Macro-F1} score with 30$\%$, 60$\%$, and 90$\%$ labeled nodes.

\noindent{\textbf{Discussion:}} 
\graphanyvec significantly speeds up the training time 
for \wordtovec and \nodetovec applications 
by distributing the computation across the cluster
without sacrificing the accuracy.
Reduced training
time also accelerates the process of improving the training algorithms as it allows
application designers to make more end-to-end passes 
in a short duration of time.

\begin{figure*}
  \centering
  \begin{minipage}{0.28\textwidth}
    \centering
      \includegraphics[width=\textwidth]{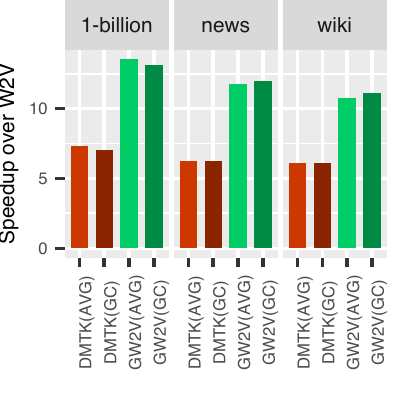}
  \vspace{-15pt}
  \caption{Speedup of DMTK and GW2V on 32 hosts over W2V on 1 host.}
  \label{fig:res:speedup}
\end{minipage}
\hspace{5pt}
\begin{minipage}{0.3\textwidth}
  \centering
    \includegraphics[width=\textwidth]{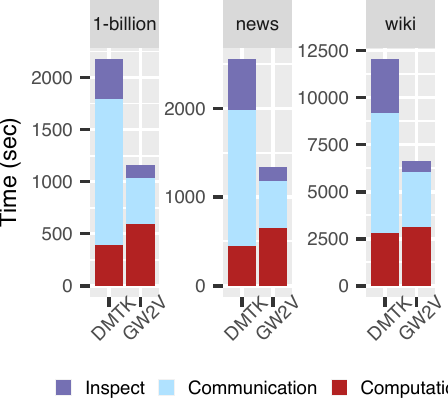}
\caption{Breakdown of training time of DMTK(GC) and GW2V(GC) on 32 hosts.}
  \label{fig:res:dmtk_v3_stacked}
\end{minipage}
\hspace{5pt}
\begin{minipage}{0.35\textwidth}
  \centering
    \includegraphics[width=0.9\textwidth]{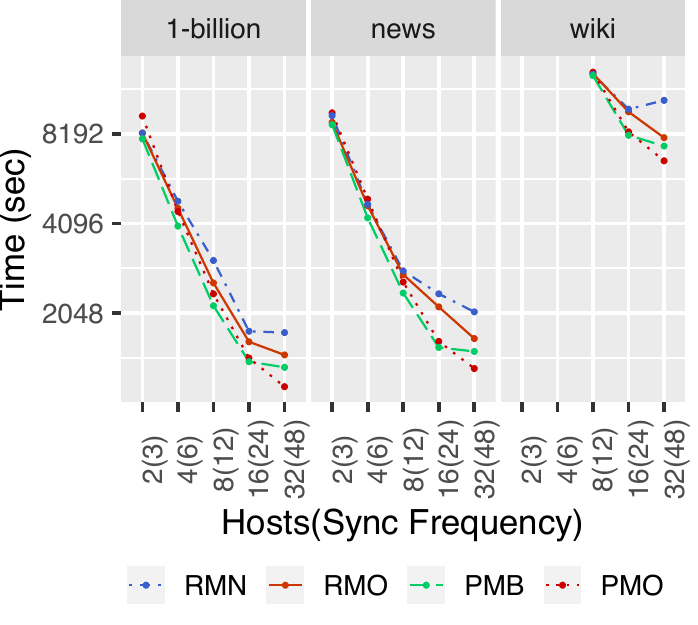}
    \vspace{-10pt}
\caption{Strong scaling of GW2V ({\small RMN: RepModel-Naive, RMO: RepModel-Opt, PMB: PullModel-Base,
  PMO: PullModel-Opt}).
}
\label{fig:res:gw2v_scaling}
\end{minipage}
\end{figure*}


\begin{figure*}
  \centering
\begin{minipage}{0.32\textwidth}
  \centering
    \includegraphics[width=0.96\textwidth]{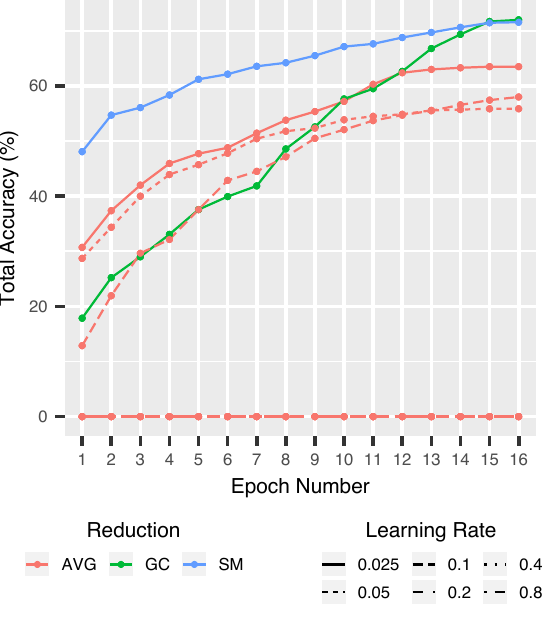}
    \vspace{-18pt}
    \caption{Accuracy of GW2V after each epoch for 1-billion 
    dataset on 1 host (SM) and on 32 hosts using GC and AVG.}
    \label{fig:res:mc_avg_verifyEachEpoch}
\end{minipage}
\hspace{5pt}
\begin{minipage}{0.32\textwidth}
  \centering
    \includegraphics[width=\textwidth]{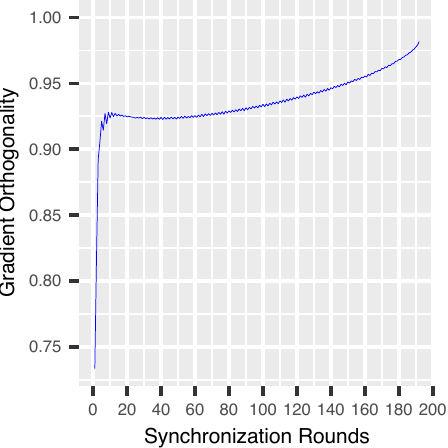}
\caption{GW2V for 1-billion dataset on 32 hosts: 
\% of gradients of a node that are orthogonal to each other 
in each epoch.}
\label{fig:res:grad_line_plot}
\end{minipage}
\hspace{5pt}
\begin{minipage}{0.32\textwidth}
  \centering
    \includegraphics[width=\textwidth]{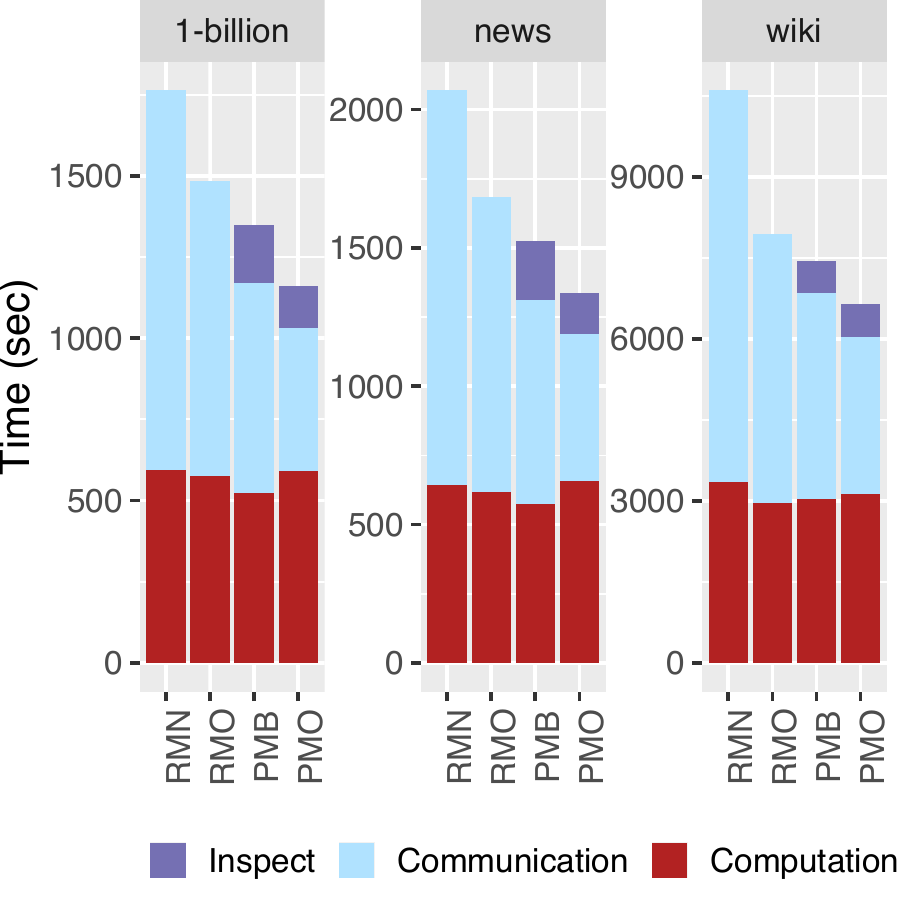}
\caption{Breakdown of training time of GW2V 
with different communication schemes
on 32 hosts.}
\label{fig:res:gw2v_stacked}
\end{minipage}
\end{figure*}


\subsection{Impact of Gradient Combiner (GC)}
\label{subsec:eval-reduction}

If time were not an issue, all machine learning algorithms would run
sequentially. A sequential SGD is simple to tune and converges fast.
Unfortunately, it is slow. A point $(x,y)$ on
Figure~\ref{fig:res:mc_avg_verifyEachEpoch} denotes the 
total accuracy ($y$) as a function of epoch ($x$). 
The blue line (SM) shows the
accuracy of GW2V on a single shared-memory host. 
It clearly converges to a high accuracy quickly. 
In contrast, the red lines plot accuracy of distributed GW2V 
that uses averaging the gradients (AVG) 
with different learning rates 
on 32 hosts.  
The learning rate of 0.025 is
the same as SM while the learning rate of 0.8 is 32 times
larger. 
The former converges slowly 
while the latter does not converge at all (accuracy is 0) 
because the learning rate is too large.
Finally, the green line plots accuracy of distributed GW2V 
that uses GC and 0.025 as the learning rate
on 32 hosts. 
GC has no problem meeting the
accuracy of the sequential algorithm. 
In addition to providing the
same accuracy as SM, it is $12\times$ times faster on 32 hosts
than SM. 
Not having to tune the learning rate and still getting
accuracy at scale is a significant qualitative contribution of our
work as tuning is a difficult task, in general.

In each round, we count the number of gradients that are 
being combined and the percentage of them that are orthogonal 
to each during combining. Figure~\ref{fig:res:grad_line_plot} 
shows this percentage as function of the rounds. 
In later rounds, more and more gradients are orthogonal to each other. 
As explained in Section~\ref{sec:combiner}, 
GC is more effective when the gradients are orthogonal. 
This is validated by the increase in accuracy for GC 
in later epochs in Figure~\ref{fig:res:mc_avg_verifyEachEpoch}.


\noindent{\textbf{Synchronization Rounds:}} 
\modelcombiner (GC) improves the accuracies significantly but in order 
to get accuracies comparable to shared-memory implementations,
the number of synchronization rounds in each epoch 
is an important knob to tune.
%
We observe that accuracies improve
as we increase the number of synchronization rounds within an epoch for both GC and AVG. 
Nonetheless, 
accuracies show more improvement for GC 
(for example, semantic: 3.07\%, syntactic: 3.99\% and total: 3.36\% when 
synchronization frequency is increased from 12 to 48 on 32 hosts) as 
opposed to AVG, which shows very little change in accuracies with synchronization rounds.
In general, we have observed that in order to maintain the desired accuracy, the synchronization 
frequency needs to be increased (roughly) linearly with the number of hosts. 
We have followed this rule of thumb in all our experiments.





\subsection{Impact of Communication Optimizations}
\label{subsec:eval-comm-opt}

Figure~\ref{fig:res:gw2v_scaling} shows the strong scaling of \graphwordvec 
with different communication optimizations 
(described in section~\ref{sec:comm-opt}): 
RepModel-Naive (RMN), RepModel-Opt (RMO),
PullModel-Base (PMB), and PullModel-Opt (PMO).
The 3 latter variants scale well up to 32 machines.

For 1-billion dataset, 
RepModel-Naive gives $4.7\times$ speedup on 32 hosts over 2 hosts. RepModel-Opt, which uses D-Galois
to only communicate the updated values for both reduction and broadcast, gives a speedup of $5.5\times$ by
reducing the communication volume. RepModel-Opt shows 16\% improvement over RepModel-Naive on 32 hosts, showing that 
RepModel-Opt is able to exploit the sparsity in the communication. The benefits of RepModel-Opt over RepModel-Naive
increases with the number of hosts for two main reasons: (a) synchronization frequency doubles with the number of hosts, 
thus communicating more data, and (b) as training data gets divided among hosts, sparsity in the updates increase. 

PullModel-Base not only avoids replication of the model on all hosts (thus can fit bigger models), but 
helps reduce communication volume over RepModel-Opt by only broadcasting model parameters 
required by hosts for the next round.
PullModel-Opt further reduces communication volume 
by taking into consideration the location of access as well: it only broadcasts
embedding vector for sources and training vector for destinations. 
These benefits come with an additional 
overhead of inspection phase before every 
synchronization round, but our evaluation shows that these overheads are offset by runtime improvements due to 
communication volume reduction. PullModel-Opt yields an average speedup of $8.1\times$ on 32 hosts over 2 hosts
and is $\sim 20.6\%$ on average faster than RepModel-Opt for all text datasets on 32 hosts.

Figure~\ref{fig:res:gw2v_stacked} shows the breakdown of the training time into inspection, computation, and
communication time of the variants on 32 hosts.
%
It is clear that all variants have similar computation time. 
RepModel-Opt communicates $\sim2\times$ less communication volume on average as opposed to RepModel-Naive, thus improving the
overall runtime. PullModel-Opt not only allows to train bigger models, it also further reduces communication volume by 
$\sim 11\%$ on average over RepModel-Opt by only broadcasting specific vectors to the proxies to be used in the next batch.


\noindent{\textbf{Summary:}} 
PullModel-Opt in \graphanyvec always performs better 
than the other variants by reducing the 
communication volume. 
These improvements are expected to grow as we scale to bigger datasets and number of hosts.
Hence, PullModel-Opt not only allows one to train bigger models, but also gives the best performance.

\section{Related Work}
\label{sec:related}
Many different types of models have been proposed in the past for estimating continuous representations of words, such
as Latent Semantic Analysis (LSA) and Latent Dirichlet Allocation (LDA). However, distributed representations of words
learned by neural networks are shown to perform significantly better than LSA~\cite{zhila2013combining,mikolov2013linguistic} and LDA is 
computationally very expensive on large data sets.
Mikolov et al.~\cite{word2vec1} proposed two simpler model architectures for computing 
continuous vector representations of words from very large unstructured data sets, known as Continuous Bag-of-Words (CBOW) and Skip-gram (SG).
These models removed the non-linear hidden layer and hence avoid dense matrix multiplications, which was responsible for most of the
complexity in the previous models.

CBOW is similar to the feedforward Neural Net Language Model (NNLM)~\cite{nnlm}, where the non-linear hidden layer is removed and the 
projection layer is shared for all words. All words get projected into the same position and their vectors are averaged.

SG on the other hand unlike CBOW, instead of predicting the current word based on the context, tries to maximize classification of a 
word based on another word within a sentence. Later Mikolov et al.~\cite{word2vec2} further introduced several extensions, such as
using hierarchical softmax instead of full softmax, negative sampling, subsampling of frequent words, etc., to SG model
that improves both the quality of the vectors and the training speed. 

Our work adapts the algorithm from this later work~\cite{word2vec2} for distribution. This work, together with many current implementations~\cite{gensim} 
are designed to run on a single machine but utilizing multi-threaded parallelism. Our work is motivated by the fact that these popularly used implementations take days 
or even weeks to train on large training corpus. Prior works on distributing \wordtovec either use synchronous data-parallelism~\cite{Deeplearning4j,Sparkword2vec,pword2vec} or
parameter-server style asynchronous data parallelism~\cite{DMTK}. However, they perform communication after every mini-batch, which is prohibitively expensive 
in terms of network bandwidth. Our design was motivated by the need to use commodity machines and network available on public clouds. Our approach communicates 
infrequently and uses our novel \modelcombiner to overcome the resulting staleness. 

Ordentlich et al.~\cite{Ordentlich16} propose a different method 
designed for models that do not fit in 
the memory of a single machine. They partition the model vertically with each machine containing part of the embedding and training vector for each word. 
These partitions compute partial dot products locally but communicate to compute global dot products. For all the publicly available benchmarks we could find,
the models fit in the memory in our machines. Nevertheless, our design allows for horizontal partitioning of large models if such a need arises in the future.

\section{Conclusions}
\label{sec:conclusions}
\graphanyvec substantially speeds up the training
time for \anytovec applications by distributing the
computation across the cluster without sacrificing the accuracy.
Reduced training time also accelerates the process of improving
the training algorithms as 
it allows application designers to make
more end-to-end passes in a short duration of time. 
\graphanyvec thus enables more explorations of \anytovec applications in areas such as 
natural language processing, network analysis, and code analysis. 

\modelcombiner is the key to \graphanyvecend's accuracy. 
It significantly improves accuracy of SGD compared to 
the traditional averaging method of combining gradients, 
even in third-party distributed implementations. 
\modelcombiner may also be useful in distributed training 
of other machine learning applications.

\bibliographystyle{abbrv}
\bibliography{graph,refs}

\begin{thebibliography}{10}

\bibitem{JMLR:v15:agarwal14a}
A.~Agarwal, O.~Chapelle, M.~Dud\'{i}k, and J.~Langford.
\newblock A reliable effective terascale linear learning system.
\newblock {\em Journal of Machine Learning Research}, 15:1111--1133, 2014.

\bibitem{code2seq}
U.~Alon, O.~Levy, and E.~Yahav.
\newblock code2seq: Generating sequences from structured representations of
  code.
\newblock 2018.

\bibitem{Alon:2019:CLD:3302515.3290353}
U.~Alon, M.~Zilberstein, O.~Levy, and E.~Yahav.
\newblock Code2vec: Learning distributed representations of code.
\newblock {\em Proc. ACM Program. Lang.}, 3(POPL):40:1--40:29, Jan. 2019.

\bibitem{nnlm}
Y.~Bengio, R.~Ducharme, P.~Vincent, and C.~Janvin.
\newblock A neural probabilistic language model.
\newblock {\em J. Mach. Learn. Res.}, 3:1137--1155, Mar. 2003.

\bibitem{bottou2012stochastic}
L.~Bottou.
\newblock {\em Stochastic Gradient Descent Tricks}.
\newblock Lecture Notes in Computer Science (LNCS). Springer, January 2012.

\bibitem{gdstudy}
L.~{Bottou}, F.~E. {Curtis}, and J.~{Nocedal}.
\newblock {Optimization Methods for Large-Scale Machine Learning}.
\newblock {\em arXiv e-prints}, Jun 2016.

\bibitem{gluon}
R.~Dathathri, G.~Gill, L.~Hoang, H.-V. Dang, A.~Brooks, N.~Dryden, M.~Snir, and
  K.~Pingali.
\newblock Gluon: A communication-optimizing substrate for distributed
  heterogeneous graph analytics.
\newblock {\em SIGPLAN Not.}, 53(4):752--768, June 2018.

\bibitem{40565}
J.~Dean, G.~S. Corrado, R.~Monga, K.~Chen, M.~Devin, Q.~V. Le, M.~Z. Mao,
  M.~Ranzato, A.~Senior, P.~Tucker, K.~Yang, and A.~Y. Ng.
\newblock Large scale distributed deep networks.
\newblock In {\em NIPS}, 2012.

\bibitem{partitioningstudy}
G.~Gill, R.~Dathathri, L.~Hoang, and K.~Pingali.
\newblock {A Study of Partitioning Policies for Graph Analytics on Large-scale
  Distributed Platforms}.
\newblock PVLDB, 2018.

\bibitem{powergraph}
J.~E. Gonzalez, Y.~Low, H.~Gu, D.~Bickson, and C.~Guestrin.
\newblock {PowerGraph: Distributed Graph-parallel Computation on Natural
  Graphs}.
\newblock OSDI'12, CA, USA.

\bibitem{directsum}
P.~Goyal, P.~Doll{\'{a}}r, R.~B. Girshick, P.~Noordhuis, L.~Wesolowski,
  A.~Kyrola, A.~Tulloch, Y.~Jia, and K.~He.
\newblock Accurate, large minibatch {SGD:} training imagenet in 1 hour.
\newblock {\em CoRR}, abs/1706.02677, 2017.

\bibitem{node2vec}
A.~Grover and J.~Leskovec.
\newblock Node2vec: Scalable feature learning for networks.
\newblock KDD '16, New York, NY, USA. ACM.

\bibitem{ggt}
T.~Hastie, R.~Tibshirani, and J.~Friedman.
\newblock {\em The Elements of Statistical Learning}.
\newblock Springer Series in Statistics. Springer New York Inc., New York, NY,
  USA, 2001.

\bibitem{cusp}
L.~Hoang, R.~Dathathri, G.~Gill, and K.~Pingali.
\newblock {CuSP: A Customizable Streaming Edge Partitioner for Distributed
  Graph Analytics}.
\newblock IPDPS 2019, 2019.

\bibitem{pword2vec}
S.~{Ji}, N.~{Satish}, S.~{Li}, and P.~K. {Dubey}.
\newblock Parallelizing word2vec in shared and distributed memory.
\newblock {\em IEEE Transactions on Parallel and Distributed Systems}, 2019.

\bibitem{sqrtlr}
A.~Krizhevsky.
\newblock One weird trick for parallelizing convolutional neural networks.
\newblock {\em CoRR}, abs/1404.5997, 2014.

\bibitem{doc2vec}
Q.~Le and T.~Mikolov.
\newblock Distributed representations of sentences and documents.
\newblock ICML'14. JMLR.org, 2014.

\bibitem{word2vec1}
T.~Mikolov, K.~Chen, G.~Corrado, and J.~Dean.
\newblock Efficient estimation of word representations in vector space.

\bibitem{word2vec2}
T.~Mikolov, I.~Sutskever, K.~Chen, G.~Corrado, and J.~Dean.
\newblock Distributed representations of words and phrases and their
  compositionality.
\newblock NIPS'13, USA.

\bibitem{mikolov2013linguistic}
T.~Mikolov, S.~W.-t. Yih, and G.~Zweig.
\newblock Linguistic regularities in continuous space word representations.
\newblock In {\em NAACL-HLT-2013}.

\bibitem{galois}
D.~Nguyen, A.~Lenharth, and K.~Pingali.
\newblock A lightweight infrastructure for graph analytics.
\newblock SOSP '13, New York, NY, USA. ACM.

\bibitem{Ordentlich16}
E.~Ordentlich, L.~Yang, A.~Feng, P.~Cnudde, M.~Grbovic, N.~Djuric,
  V.~Radosavljevic, and G.~Owens.
\newblock Network-efficient distributed word2vec training system for large
  vocabularies.
\newblock In {\em CIKM 2016, Indianapolis, IN, USA}.

\bibitem{deepwalk}
B.~Perozzi, R.~Al-Rfou, and S.~Skiena.
\newblock Deepwalk: Online learning of social representations.
\newblock KDD '14, New York, NY, USA. ACM.

\bibitem{conv}
B.~Polyak and Y.~Tsypkin.
\newblock Pseudogradient adaptation and training algorithms.
\newblock {\em Automation and Remote Control}, 34:377--397, 01 1973.

\bibitem{hogwild}
B.~Recht, C.~Re, S.~Wright, and F.~Niu.
\newblock Hogwild: A lock-free approach to parallelizing stochastic gradient
  descent.
\newblock In J.~Shawe-Taylor, R.~S. Zemel, P.~L. Bartlett, F.~Pereira, and
  K.~Q. Weinberger, editors, {\em NeurIPS 24}. 2011.

\bibitem{gensim}
R.~{\v R}eh{\r u}{\v r}ek and P.~Sojka.
\newblock {Software Framework for Topic Modelling with Large Corpora}.
\newblock In {\em {LREC 2010}}.

\bibitem{DMTK}
D.~M.~L. Toolkit.
\newblock \url{http://www.dmtk.io/}.

\bibitem{Deeplearning4j}
word2vec~in Deeplearning4j.
\newblock
  \url{https://deeplearning4j.org/docs/latest/deeplearning4j-nlp-word2vec}.

\bibitem{Sparkword2vec}
word2vec~in Spark.
\newblock \url{spark.apache.org/docs/latest/mllib-feature-extraction.html}.

\bibitem{lBSN2vec}
D.~Yang, B.~Qu, J.~Yang, and P.~Cudre-Mauroux.
\newblock Revisiting user mobility and social relationships in lbsns: A
  hypergraph embedding approach.
\newblock WWW ’19.

\bibitem{msra}
S.~Zheng, Q.~Meng, T.~Wang, W.~Chen, N.~Yu, Z.~Ma, and T.~Liu.
\newblock Asynchronous stochastic gradient descent with delay compensation for
  distributed deep learning.
\newblock {\em CoRR}, abs/1609.08326, 2016.

\bibitem{zhila2013combining}
A.~Zhila, S.~W.-t. Yih, G.~Zweig, C.~Meek, and T.~Mikolov.
\newblock Combining heterogeneous models for measuring relational similarity.
\newblock In {\em NAACL-HLT-2013}.

\bibitem{gemini}
X.~Zhu, W.~Chen, W.~Zheng, and X.~Ma.
\newblock Gemini: {A Computation-centric Distributed Graph Processing System}.
\newblock OSDI'16, CA, USA. USENIX Association.

\end{thebibliography}

\clearpage
\appendix
\section{Appendix}
\subsection{Experimental Methodology}
\label{subsec:eval-methodology}

{\bf Hardware:}
All our experiments were conducted on the 
Stampede2
cluster at the Texas Advanced Computing Center using up to
32 Intel Xeon Platinum 8160 (``Skylake'') nodes, 
each with 48 cores with clock
rate 2.1Ghz, 192GB DDR4 RAM, and 32KB L1 data cache. 
Machines in the cluster are connected with a 100Gb/s Intel Omni-Path
interconnect. Code is compiled with g++ 7.1 and MPI mvapich2/2.3.

{\bf Datasets:}
Table~\ref{tbl:datasets} lists the training datasets used for our evaluation: text
datasets for \wordtovec and graph datasets for \nodetovecend. 
These datasets have different vocabulary
sizes (\# unique words or vertices), 
total training corpus size 
(\# occurrences of words or vertices), 
and sizes on disk. 
Prior 
\wordtovec and \nodetovec publications 
used the same datasets.
The wiki
(21GB) and Youtube
(2.8GB) datasets are the largest text  
and graph datasets respectively. 
We used DeepWalk~\cite{deepwalk}\footnote{\url{https://github.com/phanein/deepwalk}}
for generating training corpus for \nodetovec by performing 10 random walks each of length 40 from all 
vertices of the graph. 
We limit our \wordtovec and \nodetovec evaluation to 
32 and 16 hosts of Stampedes respectively 
because these datasets 
do not scale beyond that.
We report the accuracy and the training (or execution) time 
for all frameworks on these datasets, excluding preprocessing 
time, as an average of three distinct runs.


{\bf Shared-memory third-party implementations:}
We evaluated the Skip-gram~\cite{word2vec2} (with negative 
sampling) training model 
for both \wordtovec and \nodetovecend.
We compared \graphwordvec (GW2V) 
with the state-of-the-art shared-memory \wordtovec implementations, 
the original C implementation (W2V)~\cite{word2vec2} 
as well as the more recent Gensim (GEN)~\cite{gensim} python implementation. 
We also compared our \graphnodevec (GV2V) with the state-of-the-art shared-memory \nodetovec framework, 
DeepWalk~\cite{deepwalk}
(both DeepWalk
and Node2Vec~\cite{node2vec} use Gensim's~\cite{gensim} Skip-gram model).

{\bf Distributed-memory third-party implementations:}
We compared \graphwordvec with the state-of-the-art distributed-memory \wordtovec 
from Microsoft's Distributed Machine Learning Toolkit (DMTK)~\cite{DMTK}, 
which is based on the parameter server model. 
The model is distributed among parameter server hosts. 
During 
execution, hosts acting as workers 
request the required model parameters from the servers and send model updates 
back to the servers. 
Each host in the cluster acts as both server and worker, 
and it is the only configuration possible. 
DMTK uses OpenMP for parallelization within a host 
(\graphwordvec uses Galois~\cite{galois} for parallelization within a host). 
Both \graphwordvec and DMTK use MPI for communication between hosts. 
We modified DMTK to include a runtime option of configuring 
the number of synchronization rounds. 
DMTK uses averaging as the reduction operation to combine 
the gradients. We refer to this as DMTK(AVG). 
We also implemented our \modelcombiner in DMTK and we 
call this DMTK(GC).
There are no prior distributed implementations of \nodetovecend. 
Unless otherwise specified, 
GW2V and GV2V uses GC to combine gradients and 
use PullModel-Opt communication optimization.

{\bf Hyper-parameters:}
We used the hyper-parameters suggested by~\cite{word2vec2}, 
unless otherwise specified: 
window size of 5, number of negative samples of 15, 
sentence length of 10K, 
threshold of $10^{-4}$ for \wordtovec and $0$ for 
\nodetovec for down-sampling the frequent words, and 
vector dimensionality 
$N$ of 200. 
All models were trained for 16 epochs. 
For distributed frameworks, GV2V, GW2V, and DMTK, we compared 2 gradient combining methods: 
Averaging (AVG) (default for distributed training of \anytovec applications) and 
our novel \modelcombiner (GC) method. 
Unless otherwise specified, 
GV2V, GW2V, and DMTK use the same number of synchronization 
rounds: 1 for 1 host, 3 for 2 hosts, 6 for 4 hosts, 
12 for 8 hosts, 24 for 16 hosts, and 48 for 32 hosts. 
Note that the default for DMTK is 1 synchronization round for 
any number of hosts, but this yields very low accuracy, 
so we do not report these results.


{\bf Accuracy:}
In order to measure the accuracy of trained models of \wordtovec on different datasets, we used the analogical 
reasoning task outlined by original \wordtovecend~\cite{word2vec2} paper.
We evaluated the accuracy using scripts and question-words.txt provided by 
the \wordtovec code base\footnote{\url{https://github.com/tmikolov/word2vec}}. Question-words.txt consists of 
analogies such as "Athens" : "Greece" :: "Berlin" : ?, which are predicted by finding a vector $x$ such that embedding vector($x$)
is closest to embedding vector("Athens") - vector("Greece") + vector("Berlin") according to the cosine distance. For this particular
example the accepted value of $x$ is "Germany". There are 14 categories of such questions, which are broadly divided into 2 main
categories: (1) the syntactic analogies (such as "calm" : "calmly" :: "quick" : "quickly") and (2) the semantic analogies such
as the country to capital city relationship. 
We report semantic, syntactic, and total accuracy averaged over all the 14 categories of questions.
For \nodetovec, we measured {\em Micro-F1} and {\em Macro-F1} scores using scoring 
scripts provided by DeepWalk~\cite{deepwalk}\footnote{\url{https://github.com/phanein/deepwalk/blob/master/example_graphs/scoring.py}}.



\subsection{Proof of Properties of \modelcombiner}
\label{sec:prop}
The three properties of \modelcombiner are (1) $g_1^T\cdot g_1^O \geq 0$, (2) $\norm{g_1^O}\leq\norm{g_1}$, and (3) $g_2^T\cdot g_1^O=0$
where $g_1^O=g_1-\frac{g_2^T\cdot g_1}{\norm{g_2}^2}g_2$. For the proof, assume that the angle between $g_1$ and $g_2$ is $\theta$.

Property (1): $g_1^T\cdot g_1^O \geq 0$. 
\begin{equation}
\begin{split}
g_1^T\cdot g_1^O&=g_1^T\cdot (g_1-\frac{g_2^T\cdot g_1}{\norm{g_2}^2}g_2)=\norm{g_1}^2-\frac{g_2^T\cdot g_1}{\norm{g_2}^2}g_1^T\cdot g_2\\
		&=\norm{g_1}^2 - \frac{\norm{g_1}^2\norm{g_2}^2\cos^2\theta}{\norm{g_2}^2}=\norm{g_1}^2\sin^2\theta\geq0
\end{split}
\end{equation}

Property (2): $\norm{g_1^O}\leq\norm{g_1}$.
\begin{equation}
\begin{split}
	\norm{g_1^O}^2 &= \norm{g_1-\frac{g_2^T\cdot g_1}{\norm{g_2}^2}g_2}^2\\
	               &=\norm{g_1}^2 + \frac{(g_2^T\cdot g_1)^2}{\norm{g_2}^4}\norm{g_2}^2-2\frac{g_2^T\cdot g_1}{\norm{g_2}^2}g_1^T\cdot g_2\\
		       &=\norm{g_1}^2 + \frac{(g_2^T\cdot g_1)^2}{\norm{g_2}^2}-2\frac{(g_2^T\cdot g_1)^2}{\norm{g_2}^2} =\norm{g_1}^2 - \frac{(g_2^T\cdot g_1)^2}{\norm{g_2}^2} \\
		       &=\norm{g_1}^2 - \frac{\norm{g_1}^2\norm{g_2}^2\cos^2\theta}{\norm{g_2}^2}= \norm{g_1}^2-\norm{g_1}^2\cos^2\theta\\
		       &=\norm{g_1}^2\sin^2\theta\leq\norm{g_1}^2
\end{split}
\end{equation}

Property (3): $g_2^T\cdot g_1^O=0$.
\begin{equation}
\begin{split}
	g_2^T\cdot g_1^O &= g_2^T\cdot (g_1-\frac{g_2^T\cdot g_1}{\norm{g_2}^2}g_2)=g_2^T\cdot g_1 - \frac{g_2^T\cdot g_1}{\norm{g_2}^2}\norm{g_2}^2\\
			 &=g_2^T\cdot g_1 - g_2^T\cdot g_1 = 0
\end{split}
\end{equation}

\subsection{\modelcombiner (GC) Convergence Proof}
\label{sec:combiner-proof}
\cite{conv} discusses the requirements for a training algorithm to converge to its optimal answer. Here we will present a simplied version of
Theorem 1 and Corollary 1 from~\cite{conv}.

Suppose that there are $N$ training examples for a model with loss functions $L_1(w),\dots,L_N(w)$ where $w$ is the model
parameter and $w_0$ is the initial model.  
Define $L(w)=\frac{1}{N}\sum_i L_i(w)$. Also assume that $w^*$ is the optimal model where $L(w^*)\leq L(w)$ for all $w$s. A training algorithm is {\em pseudogradient} if:
\begin{itemize}
	\item It is an iterative algorithm where $w_{i+1} = w_i-\alpha_i h_i$ where $h_i$ is a random vector and $\alpha_i$ is a scalar.
	\item $\forall \epsilon \exists \delta: E(h_i)^T\cdot \nabla L(w)\geq \delta>0$ where $L(w)\geq L(w^*)+\epsilon$ and $w^*$ is the optimal model.
	\item $E(\norm{h_i}^2)<C$ where $C$ is a constant.
	\item $\forall i: \alpha_i\geq 0$, $\sum_i \alpha_i=\inf$, and $\sum_i \alpha_i^2 < \inf$.
\end{itemize}

The following Theorem is taken from~\cite{conv}.
\begin{theorem}\label{conv-theorem}
A pseudogradient training algorithm converges to the optimal model $w^*$.
\end{theorem}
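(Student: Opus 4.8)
\emph{Proof proposal.} The plan is to run the classical Lyapunov / supermartingale argument for stochastic approximation, using the loss gap as the potential. First I would fix the optimum $w^*$ and define the nonnegative process $V_i := L(w_i) - L(w^*) \geq 0$, where nonnegativity is exactly the assumed optimality of $w^*$. Since the four listed conditions are a simplified extract from~\cite{conv}, I would carry over from there the standing regularity that $L$ has a Lipschitz-continuous gradient (with constant $M$, say); this is what makes the one-step analysis go through and is left implicit in the simplified statement.

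The core step is a one-step expected-decrease estimate. Writing the update $w_{i+1}=w_i-\alpha_i h_i$ and applying the descent lemma for $M$-smooth functions gives
\begin{equation}
L(w_{i+1}) \leq L(w_i) - \alpha_i\, h_i^T \nabla L(w_i) + \frac{M}{2}\alpha_i^2 \norm{h_i}^2 .
\end{equation}
Let $\mathcal{F}_i$ denote the history up to step $i$. Taking the conditional expectation $E[\,\cdot \mid \mathcal{F}_i]$, using condition (2) to bound $E(h_i)^T\nabla L(w_i)\geq\delta$ on the event $\{V_i\geq\epsilon\}$, and using condition (3) to bound $E(\norm{h_i}^2)\leq C$, I obtain
\begin{equation}
E[V_{i+1}\mid\mathcal{F}_i] \leq V_i - \alpha_i\,\delta\,\mathbf{1}\{V_i\geq\epsilon\} + \frac{MC}{2}\alpha_i^2 .
\end{equation}
This exhibits $V_i$ as an \emph{almost-supermartingale}: a nonnegative process whose drift is nonpositive up to the perturbation $\tfrac{MC}{2}\alpha_i^2$, which is summable because $\sum_i\alpha_i^2<\infty$ by condition (4).

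I would then invoke the Robbins--Siegmund almost-supermartingale convergence theorem, which yields two facts simultaneously: $V_i$ converges almost surely to a finite limit $V_\infty\geq 0$, and the accumulated drift $\sum_i \alpha_i\,\delta\,\mathbf{1}\{V_i\geq\epsilon\}$ is finite almost surely. The final step is to identify $V_\infty$: since $\sum_i\alpha_i=\infty$ (condition (4)) while $\delta>0$ is fixed, finiteness of the accumulated drift forces $\mathbf{1}\{V_i\geq\epsilon\}$ to vanish for all large $i$, and combined with the almost-sure convergence of $V_i$ this gives $V_\infty\leq\epsilon$. As $\epsilon>0$ was arbitrary, $V_\infty=0$, i.e. $L(w_i)\to L(w^*)$, and under uniqueness of the minimizer this upgrades to $w_i\to w^*$.

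I expect the main obstacle to be precisely this last identification step: the descent guarantee of condition (2) is active only outside the $\epsilon$-sublevel set, so the argument must carefully interleave the almost-sure convergence of $V_i$ with the divergence $\sum_i\alpha_i=\infty$ in order to rule out a strictly positive limit, and it must separately appeal to uniqueness (or strict convexity) to pass from convergence of the loss to convergence of the parameter vector $w$. Making the smoothness hypothesis of~\cite{conv} explicit, and checking that condition (3) controls the noise term uniformly in $i$, are the other points where care is needed.
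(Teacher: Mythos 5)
The paper does not actually prove this theorem: it is stated verbatim as ``taken from~\cite{conv}'' and used as a black box to establish that the Gradient Combiner converges, so there is no in-paper proof to compare yours against. Judged on its own, your proposal is the standard and essentially correct route to such pseudogradient convergence results: a Lyapunov potential $V_i = L(w_i)-L(w^*)$, a one-step descent estimate from $M$-smoothness, and the Robbins--Siegmund almost-supermartingale theorem to get almost-sure convergence of $V_i$ plus summability of the accumulated drift. You are also right that the simplified hypothesis list in the paper is not self-contained --- Lipschitz continuity of $\nabla L$ (to run the descent lemma) and uniqueness of the minimizer (to upgrade $L(w_i)\to L(w^*)$ to $w_i\to w^*$) must be imported from the cited source, and flagging these is appropriate rather than a defect. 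The one place your write-up is loose is the identification step: finiteness of $\sum_i \alpha_i\,\delta\,\mathbf{1}\{V_i\geq\epsilon\}$ together with $\sum_i\alpha_i=\infty$ does not directly force the indicator to vanish for all large $i$; the clean argument is by contradiction --- if $V_\infty>\epsilon$ then $V_i\geq\epsilon$ eventually, so the drift sum diverges. Since you reach exactly the conclusion $V_\infty\leq\epsilon$ that this contradiction yields, and then let $\epsilon\downarrow 0$, the argument stands; just phrase that step as a contradiction rather than as the indicator being eventually zero.
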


As a reminder, $GC(g_1,g_2) = g_2+g_1-\frac{g_2^T\cdot g_1}{\norm{g_2}^2}g_2$ and for $k$ gradients,
$GC(g_1,\dots,g_k)=GC(g_k,\dots,GC(g_3,GC(g_2,g_1))\dots)$. Suppose 
$G(w_i)=\{\frac{\partial L_1}{\partial w}|_{w_i},\dots,\frac{\partial L_N}{\partial w}|_{w_i}\}$ is a random variable
distribution of the gradients at $w_i$. 
\begin{theorem}
Suppose $h_i=GC(g_1,\dots,g_k)$ where $g_1,\dots,g_k$ are $k$ independently chosen gradients from $G(w_i)$.
$h_i$ is pseudogradient.
\end{theorem}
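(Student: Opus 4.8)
The plan is to verify the four defining conditions of a pseudogradient algorithm in turn. The first and the learning-rate condition (fourth bullet) are not really statements about the combined direction $h_i$: the first holds by construction, since Algorithm~\ref{alg:execution} updates by $w_{i+1} = w_i - \alpha_i h_i$ with $h_i = GC(g_1,\dots,g_k)$ a random vector, and the fourth is a standard assumption on the decaying learning-rate schedule $\{\alpha_i\}$ that I would simply impose. The real content is therefore in the second condition (expected descent) and the third (bounded second moment), and both hinge on one structural observation about $GC$.

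That observation is the telescoping form of the sequential combiner. Writing $u_1 = g_1$ and $u_j = GC(g_j, u_{j-1})$, the two-argument identity~\ref{eq:gc} gives $u_j = u_{j-1} + g_j^{O}$, where $g_j^{O} = g_j - \frac{u_{j-1}^T g_j}{\norm{u_{j-1}}^2}\,u_{j-1}$ is the component of $g_j$ orthogonal to the running combination $u_{j-1}$. Unrolling the recursion yields
\begin{equation}
h_i = u_k = g_1 + \sum_{j=2}^{k} g_j^{O}.
\end{equation}
For the third condition, property~(3) from Appendix~\ref{sec:prop} gives $g_j^{O} \perp u_{j-1}$, so $\norm{u_j}^2 = \norm{u_{j-1}}^2 + \norm{g_j^{O}}^2$, while property~(2) gives $\norm{g_j^{O}} \leq \norm{g_j}$. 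Telescoping these yields $\norm{h_i}^2 \leq \sum_{j=1}^{k}\norm{g_j}^2$, so $E(\norm{h_i}^2) \leq \sum_{j=1}^k E(\norm{g_j}^2) \leq kC_0$ whenever the individual sample gradients have a uniformly bounded second moment $E(\norm{g_j}^2)\leq C_0$. This is the standard regularity assumption on $G(w)$ and supplies the constant $C = kC_0$ demanded by the third bullet.

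The heart of the argument is the second condition, $E(h_i)^T \nabla L(w) \geq \delta > 0$ on the region $L(w) \geq L(w^*) + \epsilon$. Here I would use that the $g_j$ are i.i.d.\ draws from $G(w)$, so $E(g_j) = \frac{1}{N}\sum_l \frac{\partial L_l}{\partial w}\big|_{w} = \nabla L(w)$, and crucially that $g_j$ is independent of $u_{j-1}$, which depends only on $g_1,\dots,g_{j-1}$. Conditioning on $u_{j-1}$ and averaging over the fresh sample $g_j$,
\begin{equation}
E\big(g_j^{O} \mid u_{j-1}\big) = \nabla L(w) - \frac{u_{j-1}^T \nabla L(w)}{\norm{u_{j-1}}^2}\,u_{j-1},
\end{equation}
which is exactly $\nabla L(w)$ projected onto the orthogonal complement of $u_{j-1}$. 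Applying property~(1) to the pair $\big(\nabla L(w), u_{j-1}\big)$ gives $E\big(g_j^{O}\mid u_{j-1}\big)^T \nabla L(w) = \norm{E\big(g_j^{O}\mid u_{j-1}\big)}^2 \geq 0$. Taking the tower expectation and summing over $j$, every orthogonalized term contributes nonnegatively, leaving $E(h_i)^T \nabla L(w) \geq \norm{\nabla L(w)}^2$.

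The main obstacle I anticipate is the last mile of the second condition: converting $\norm{\nabla L(w)}^2$ into a uniform positive lower bound $\delta$ on $\{w : L(w) \geq L(w^*)+\epsilon\}$. This is precisely the regularity hypothesis already needed for ordinary SGD ($k=1$) to be pseudogradient, so I would inherit it and set $\delta = \inf_{L(w)\geq L(w^*)+\epsilon}\norm{\nabla L(w)}^2 > 0$. The subtle point worth stating carefully is the independence of $g_j$ from $u_{j-1}$, since it is what lets the nonlinear normalization $\frac{u_{j-1}^T g_j}{\norm{u_{j-1}}^2}$ pass cleanly through the conditional expectation; without i.i.d.\ sampling this step would fail. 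I would also note that the approximations used to \emph{derive} $GC$ in Section~\ref{sec:combiner} (Equations~\ref{eq:te}--\ref{eq:teggt}) are not needed here: the pseudogradient property follows from the exact geometric identities~(1)--(3) alone. Given all four conditions, Theorem~\ref{conv-theorem} then yields convergence of $GC$ to $w^*$.
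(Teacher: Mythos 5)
Your proposal is correct and follows essentially the same route as the paper's proof: the telescoping decomposition $h_i = g_1 + \sum_{j=2}^k g_j^{O}$ combined with the tower property is just an unrolled version of the paper's induction on $k$, both hinge on the independence of $g_j$ from the accumulated combination to pass the expectation through the rank-1 projection, and the bounded-second-moment argument via properties (2)--(3) and the Pythagorean theorem is identical. Your only addition is the more careful last step of extracting a uniform $\delta = \inf_{L(w)\geq L(w^*)+\epsilon}\norm{\nabla L(w)}^2 > 0$, which the paper glosses over with the remark that $\norm{\nabla L(w_i)}^2$ vanishes only at the optimum.
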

\begin{proof}
To facilitate the proof of the pseudogradient properties of $h_i$, we rewrite GC formula as follows:
\begin{equation}
\begin{split}
	GC(g_1,g_2)&=g_2+g_1-\frac{g_2^T\cdot g_1}{\norm{g_2}^2}g_2=g_1 + g_2 - \frac{g_2\cdot g_2^T}{\norm{g_2}^2}g_1\\
	&=\Big(I-\frac{g_2\cdot g_2^T}{\norm{g_2}^2}\Big)g_1+g_2
\end{split}
\end{equation}
where $\frac{g_2\cdot g_2^T}{\norm{g_2}^2}$ is a rank-1 matrix. 

First by induction, we prove that $E(GC(g_1,\dots,g_k))$ and $\nabla L(w_i)$ have a positive inner product.

Base of the induction:
because $g_1$ and $g_2$ are independently chosen, $E(GC(g_1,g_2))$ can be calculated by:
\begin{equation}
\begin{split}
E(GC(g_1,g_2)) &= E\Big(I-\frac{g_2\cdot g_2^T}{\norm{g_2}^2}\Big)E(g_1) + E(g_2)\\
							 &=\frac{1}{N}\sum_{a\in G(w_i)}\Big(I-\frac{a\cdot a^T}{\norm{a}^2}\Big)\nabla L(w_i) + \nabla L(w_i)
\end{split}
\end{equation}
where the last equation comes from the fact that $E(g_j)=\nabla L(w_i)$ for any $g_j\in G(w_i)$. Using the above formula,
we have:
\begin{equation}
\vspace{-10pt}
\begin{split}
&\nabla L(w_i)^T \cdot E(GC(g_1,g_2)) = \nabla L(w_i)^T \frac{1}{N}\sum_{a\in G(w_i)}\Big(I-\frac{a\cdot a^T}{\norm{a}^2}\Big)\nabla L(w_i)\\
&+ \nabla L(w_i)^T \cdot\nabla L(w_i)=\frac{1}{N}\sum_{a\in G(w_i)}\Big(\norm{\nabla L(w_i)}^2\\
&-\nabla L(w_i)^T\frac{a\cdot a^T}{\norm{a}^2}\nabla L(w_i)\Big) +\norm{\nabla L_{w_i}}^2\\
&=\frac{1}{N}\sum_{a\in G(w_i)}\Big(\norm{\nabla L(w_i)}^2-\norm{\nabla L(w_i)}^2\cos^2\theta_a\Big) + \norm{\nabla L_{w_i}}^2 \\
&=\frac{1}{N}\sum_{a\in G(w_i)}\Big(\norm{\nabla L(w_i)}^2\sin^2\theta_a\Big) + \norm{\nabla L_(w_i)}^2 \geq \norm{\nabla L_(w_i)}^2
\end{split}
\end{equation}
where $\theta_a$ is the angle between $a$ and $\nabla L(w_i)$.

Induction step:
Now assume that $\nabla L(w_i)^T \cdot GC(g_1,\dots,g_{l-1})\geq \norm{\nabla L_(w_i)}^2$. Also, assume that the
random vector distribution that is generated by $GC(g_1,\dots,g_{l-1})$ is $X$ with a size of $M$. Therefore, for the induction step, we have:
\begin{equation}
\vspace{-10pt}
\begin{split}
&\nabla L(w_i)^T \cdot E(GC(g_1,\dots,g_{l-1},g_l)) = \nabla L(w_i)^T \cdot \Bigg(E(g_l)\\
&+ E(GC(g_1,\dots,g_{l-1})) - \frac{1}{M}\sum_{a\in X} \frac{a\cdot a^T}{\norm{a}^2}E(g_1)\Bigg)=\norm{\nabla L(w_i)}^2\\
&+\nabla L(w_i)^T\cdot E(GC(g_1,\dots,g_{l-1}))-\frac{1}{M}\sum_{a\in X} \nabla L(w_i)^T\frac{a\cdot a^T}{\norm{a}^2}\nabla L(w_i) \\
&= \nabla L(w_i)^T\cdot E(GC(g_1,\dots,g_{l-1}))+\norm{\nabla L(w_i)}^2\\
&-\frac{1}{M}\sum_{a\in X} \norm{\nabla L(w_i)}^2\cos^2\theta_a =\nabla L(w_i)^T\cdot E(GC(g_1,\dots,g_{l-1}))\\
&+ \frac{1}{M}\sum_{a\in X} \norm{\nabla L(w_i)}^2\sin^2\theta_a \geq \norm{\nabla L(w_i)}^2
\end{split}
\end{equation}
where $\theta_a$ is the angle between $\nabla L(w_i)$ and $a$ and the last inequality is implied by the induction assumption. Therefore, the inner product
of GC and $\nabla L(w_i)$ is positive in expectation as $\norm{\nabla L(w_i)}^2$ is only zero at the optimal point.

Now we prove that norm of GC is bounded. We assume that norm of the gradients of $G(w_i)$ for all $w_i$s are bounded.
\begin{equation}
\norm{GC(g_1,g_2)}^2=\norm{g_1^O+g_2}^2=\norm{g_1^O}^2+\norm{g_2}^2\leq \norm{g_1}^2+\norm{g_2}^2
\end{equation}
where the last equality is implied by the Pythagorean theorem (property (3): $g_1^O$ and $g_2$ are orthogonal) and 
the inequality is implied by $\norm{g_1^O}\leq \norm{g_1}$ (property (2)). This can be easily extended to GC
for $k$ gradients: $\norm{GC(g_1,\dots,g_k)}^2\leq \sum_j \norm{g_j}^2$. Therefore, $E(\norm{GC(g_1,\dots,g_k)}^2)\leq \sum_j E(\norm{g_j}^2)=kE(\norm{g_1})^2$.
Therefore, norm of GC for $k$ gradients is also bounded.

Given that
 GC uses the same learning rate schedule as sequential SGD, the requirement for the learning rate of a pseudogradient training
algorithm is already met. 
Thus, from Theorem~\ref{conv-theorem}, 
the gradients computed with GC moves the model to
the optimal point.
\end{proof}



\end{document}